\documentclass{article}

 \usepackage[preprint]{neurips_2026}

\usepackage{algorithm}
\usepackage{multirow}
\usepackage[table]{xcolor}   
\usepackage{booktabs}        
\usepackage[most]{tcolorbox}
\tcbuselibrary{skins}
\newtcolorbox{rephrasebox}[1]{
  enhanced, colback=white, colframe=black!30,
  colbacktitle=black!5, coltitle=black, fonttitle=\small\bfseries,
  title={#1}, boxrule=0.5pt, arc=2pt,
  sidebyside, sidebyside align=top seam, sidebyside gap=14pt,
  left=6pt, right=6pt, top=5pt, bottom=5pt}
\usepackage{algpseudocode}
\usepackage[utf8]{inputenc} 
\usepackage[T1]{fontenc}    
\usepackage{url}            
\usepackage{booktabs}       
\definecolor{cqwen}{RGB}{123,80,162}    
\definecolor{cllama}{RGB}{0,114,178}    
\definecolor{ccomas}{RGB}{205,60,66}    
\definecolor{corange}{RGB}{232,126,42}  
\definecolor{cgain}{RGB}{20,120,50}     
\definecolor{cdrop}{RGB}{190,55,55}         
\usepackage{graphicx} 

\usepackage{amsmath}

\usepackage{amsfonts}       
\usepackage{nicefrac}       
\usepackage{microtype}      
\usepackage{xcolor}         
\usepackage{bbm}
\definecolor{citeblue}{RGB}{0,102,204}
\definecolor{refred}{RGB}{216, 81, 64}
\usepackage{amsthm}
\usepackage{xspace}

\usepackage[colorlinks=true,
            linkcolor=refred,
            citecolor=citeblue,
            urlcolor=blue,
            hyperfootnotes=false]{hyperref}
\newcommand{\method}{\textsc{Co-RL}\xspace}
\usepackage{amsmath}
\usepackage{amssymb}
\usepackage{amsthm}
\usepackage{bbm}
\newtheorem{proposition}{Proposition}
\newtheorem{theorem}{Theorem}

\definecolor{takeawaygreen}{RGB}{82,128,57}

\newtcolorbox{takeaway}{
  enhanced,
  blanker,             
  breakable,
  left=4pt,
  right=4pt,
  top=6pt,
  bottom=6pt,
  borderline north={1.2pt}{0pt}{takeawaygreen},
  borderline south={1.2pt}{0pt}{takeawaygreen},
}
\title{Co-RL: Unsupervised Reasoning Emerges from Diverse Cohort in Multi-agent RL}

\author{%
  \textbf{Yunhao Yang}$^{* \spadesuit}$\quad
  \textbf{Yuexin Bian}$^{* \heartsuit}$\quad
  \textbf{Yunjie Tian}$^{\clubsuit}$\quad
  \textbf{Di Fu}$^{\clubsuit}$\quad
  \textbf{Tianjin Huang}$^{\diamondsuit}$\\
  \textbf{Yuanyuan Shi}$^{\heartsuit}$\quad
  \textbf{Ziang Xiao}$^{\spadesuit}$\quad
  \textbf{Nuno Vasconcelos}$^{\heartsuit}$\quad
  \textbf{Yijiang Li}$^{\heartsuit \dagger}$
  \\[4pt]
  {\normalfont
    $^{\spadesuit}$Johns Hopkins University \quad
    $^{\heartsuit}$UC San Diego}\\
  {\normalfont
    $^{\diamondsuit}$University of Exeter \quad
    $^{\clubsuit}$Independent Researcher}\\[3pt]
  {\normalfont\texttt{\{yijiangli, nuno\}@ucsd.edu}}
}

\begin{document}

\maketitle

\renewcommand{\thefootnote}{}
\footnotetext{$^{*}$Equal contribution. $^{\dagger}$Project lead.}
\footnotetext{
Corresponding to Yijiang Li: yijiangli@ucsd.edu, Nuno Vasconcelos: nuno@ucsd.edu}
\renewcommand{\thefootnote}{\arabic{footnote}}
\begin{abstract}
Reinforcement learning (RL) has emerged as a powerful approach for improving reasoning in language and vision-language models, yet its strongest successes still depend heavily on ground-truth supervision (e.g., verifiable reward). Such annotations are costly to obtain and become increasingly scarce as reasoning capabilities advance beyond what humans can reliably evaluate. 
Self-rewarding RL reduces this dependence by enabling models to derive reward signals from their own completions. However, training solely on self-generated feedback can reinforce existing biases and suboptimal behaviors, reduce response diversity, and ultimately lead to homogenized responses and training collapse. 
In this work, we show that unsupervised reasoning can emerge through cooperative multi-agent training. We introduce \method, a framework in which multiple decoupled models, sharing no parameters, are simultaneously optimized through RL using rewards derived from their peers. We further show that increasing cohort diversity, through heterogeneous model families, sizes, and rephrased training samples, reduces the correlated errors that drive self-reinforcing feedback loops. This diversity consistently improves reasoning performance, maintains behavioral diversity, and mitigates training collapse.
Across text-only and multimodal domains, \method consistently outperforms the base models and prior label-free approaches, while matching or surpassing supervised methods, \textit{without access to any ground-truth labels}. Concretely, \method yields average gains of 3.0--8.6\% across seven text-only benchmarks for LLMs and 2.3--7.2\% across four multimodal benchmarks for VLMs. Code is available at~\url{https://github.com/DrStranded/Co-RL}.
\end{abstract}

\section{Introduction}
\label{sec:intro}


Reinforcement learning with verifiable rewards (RLVR) has emerged as a powerful approach for improving reasoning in large language models \citep{lightman2023letsverify, guo2025deepseekr1}, yet its strongest successes still depend heavily on ground-truth supervision. Such supervision is costly to obtain and becomes increasingly scarce as target reasoning capabilities approach or surpass what humans can reliably evaluate \citep{yue2025rlvr}. Self-rewarding RL reduces this dependence by deriving rewards from the model's own completions, incorporating signals such as agreement with its majority-vote prediction \citep{zuo2025ttrl}, self-certainty \citep{intuitor2025}, predictive entropy \citep{rent2025}, or consistency across paraphrased inputs or moving-average policies \citep{corewarding2025}. However, these signals remain within a single model's own predictions. Without an external reference, such self-reinforcement can amplify existing biases and suboptimal behaviors, reduce response diversity, and ultimately lead to increasingly homogeneous outputs or even training collapse.


This raises a fundamental question: \emph{how can a model obtain a sufficiently independent learning signal to improve without any ground-truth supervision?} In this work, we show that such a signal can emerge from independently trained models. Since their errors are not perfectly correlated, each model can provide corrective feedback that the other cannot derive from its own generations. We therefore take the learning signal from a separate model, giving each agent \emph{decorrelated supervision}: a target produced by independently updated weights that is less likely to echo its own biases \citep{blum1998cotraining, li2023diverse}.

Building on this insight, we introduce \method, a cooperative multi-agent label-free RL method in which multiple decoupled models, sharing no parameters, are optimized simultaneously using rewards derived from their cohorts. Given an unlabeled prompt, each agent samples multiple completions and aggregates them into a pseudo-answer through majority voting \citep{wang2023selfconsistency}. The completions of one agent are then rewarded
against another agent's pseudo-answer, and the resulting rewards drive policy
optimization with GRPO \citep{shao2024deepseekmath} or
REINFORCE++ \citep{hu2025reinforcepp}. Unlike self-rewarding methods, where the
policy and the reward come from the same model, \method draws supervision from
an independently updated partner, which breaks the feedback loop that amplifies
a model's own bias.


What a cohort teaches depends on how different its mistakes are. Highly similar models tend to make correlated errors and may reinforce the same incorrect answers. Therefore, cohort diversity is crucial to the effectiveness of \method. Thus, we push it as far as the models allow: different families of architecture and pretrained weights, model size, and input formulation (e.g., rephrased prompt \citep{corewarding2025}). These differences expose agents to distinct inductive biases and decision boundaries, reducing correlated errors and strengthening the corrective signal available to their cohorts.

By incorporating multiple agents in the training loop, \method naturally constitutes a multi-agent RL framework. However, unlike prior multi-agent methods  built around debate or iterative communication \citep{du2023debate, liang2023encouraging}, \method requires no interaction between agents beyond the reward stage and eliminates the need for an external LLM judge or learned reward model \citep{comas2025, maporl2025, marti2026}. The resulting framework is lightweight and symmetric: each agent serves simultaneously as a learner and a source of supervision, allowing all agents to improve within a single training run.

Across text-only and multimodal domains, diverse model families, and training
settings, \method consistently improves upon the base models and prior
label-free approaches, while matching or surpassing supervised training in
several settings, \textit{without access to ground-truth labels}. Across seven
text-only benchmarks spanning mathematical reasoning, coding, and
knowledge-intensive reasoning, \method improves four LLMs by 3.0--8.6\% on average across seven text-only
benchmarks, outperforming the strongest self-rewarding baselines by
0.8--2.0\%. The gains extend consistently to multimodal
reasoning, where \method improves five VLMs ranging from 2B to 12B parameters
by 2.3--7.2\% across MathVision, MathVerse, MathVista, and We-Math. Under the
controlled evaluation setting of CoMAS~\citep{comas2025}, \method further
outperforms prior multi-agent RL methods by 4.0\% on average while using only
half as many agents. Together, these results demonstrate that cross-agent
supervision provides an effective and general learning signal across
modalities, model families, and reasoning domains without relying on labeled
supervision or external judges.


\section{Related Work}
\label{sec:related}

\textbf{Self-rewarding RL.}
While RLVR effectively improves LLM reasoning~\citep{shao2024deepseekmath, guo2025deepseekr1, yu2025dapo}, its reliance on high-quality ground-truth labels remains a major bottleneck~\citep{yue2025rlvr}. Recent work therefore explores self-rewarding mechanisms that learn from unlabeled data. One prominent direction derives reward signals solely from a model's own behavior, such as majority voting~\citep{wang2023selfconsistency}, self-consistency~\citep{zuo2025ttrl}, self-certainty~\citep{intuitor2025, li2025rlsc}, or predictive entropy~\citep{rent2025, zhang2025empo}.
This line extends earlier work on self-rewarding models \citep{yuan2024selfrewarding} and self-play supervision \citep{chen2024spin}, and now includes unsupervised self-training \citep{xu2025genius, fang2025serl}, self-correction based on a model's own judgments \citep{xiong2025selfrewardingcorrection}, and zero-data self-evolution, where one or more models generate their own curricula \citep{zhao2025absolutezero, huang2025rzero, liu2025spiral}. However, because these methods rely exclusively on a single model's own view, they can reinforce and amplify existing biases and errors without providing an external corrective signal, often leading to substantial training collapse.

\textbf{Co-training, cross-view supervision, and its origins.}
Using a second view to supervise a learner is a longstanding idea. Co-training shows that two conditionally independent views can teach each other \citep{blum1998cotraining}, while subsequent analysis demonstrates that highly similar views tend to reinforce the same errors: the more alike the two views are, the more they simply agree on the same mistakes \citep{li2023diverse}. Related principles underlie self-supervised representation learning, where augmentations, momentum targets, or stop-gradient operations prevent collapse \citep{chen2020simclr, grill2020byol, caron2021dino}, and deep mutual learning, where peer networks provide reciprocal supervision \citep{zhang2018mutual}. Co-rewarding \citep{corewarding2025} brings this insight to label-free RL by deriving rewards from either paraphrased questions or a slowly updated model copy. However, because both views originate from the same model, their errors remain strongly correlated, only partially satisfying the independence required for effective two-view learning.

\textbf{Multi-agent RL.}
Several methods improve reasoning by letting multiple models interact.
 At inference time, multi-agent debate and round-table consensus have models critique and revise one another \citep{du2023debate, liang2023encouraging, chen2023reconcile, sun2023corex}, and general orchestration frameworks compose such agents into pipelines \citep{wu2023autogen, zhang2024chainofagents, kim2024mdagents}. A more recent line trains the agents: CoMAS turns interactions scored by an LLM judge into rewards \citep{comas2025}, MAPoRL and MARFT co-train agents against a learned or shared reward \citep{maporl2025, liao2025marft}, and other systems reinforce or bootstrap multi-agent cooperation directly \citep{motwani2024malt, chen2024optima, zhao2025sirius, marti2026, chen2025multiagentevolve}.

\textbf{RLVR for multimodal reasoning.}
RLVR has rapidly expanded to vision-language models, with R1-style rule-based rewards underpinning a broad family of methods \citep{visionr12025, shen2025vlmr1, feng2025videoorion, liu2025visualrft, meng2025mmeureka, peng2025lmmr1, yang2025r1onevision, wang2025skyworkr1v2}. Existing work mainly improves reward design and training stability through curriculum or perception-aware rewards \citep{deng2025currreft, yu2025perceptionr1, wang2025vlrethinker}, data augmentation and selection \citep{liu2025noisyrollout, wang2025thinklitevl}, and staged supervised-to-RL training \citep{deng2025openvlthinker, chen2025vlaathinker}, but still largely assumes verifiable labels. Label-free multimodal RL remains underexplored, and existing methods again rely on a single model's own signals \citep{mmupt2025}. Cross-view supervision is especially promising here, since VLMs rarely share a vision encoder, pairing InternViT \citep{chen2024internvl}, SigLIP \citep{gemma2025}, or a native-resolution encoder \citep{bai2025qwenvl, zhang2025pixels, zhang2025unified} with different LLMs, yielding distinctive families of models.

\section{Preliminary} 
\label{sec:preliminary}

\subsection{Reinforcement Learning for Reasoning}
\label{sec:prelim_rl_reasoning}

Given a prompt $x \sim \mathcal{D}$, an autoregressive language model
$\pi_\theta$ generates a response $y=(y_1,\ldots,y_T)$ according to $
    \pi_\theta(y\mid x)
    =
    \prod_{t=1}^{T}
    \pi_\theta(y_t\mid x,y_{<t})$.
Reinforcement learning optimizes the policy using a scalar reward $r(x,y)$
that evaluates the quality of the generated response. The corresponding
objective is
{\small
\begin{equation}
    \mathcal{J}(\theta)
    =
    \mathbb{E}_{x\sim\mathcal{D},\,
    y\sim\pi_\theta(\cdot\mid x)}
    \left[r(x,y)\right].
\end{equation}}%
For reasoning tasks, the response $y$ typically contains both an intermediate
reasoning trajectory and a final answer, while the reward is commonly assigned
at the response level based on answer correctness or another outcome-based
criterion.

Among policy optimization algorithms, Group Relative Policy Optimization (GRPO) has been widely adopted as it improves reasoning performance without requiring a separately trained critic model~\citep{shao2024deepseekmath,guo2025deepseekr1}. Given a prompt $x$, the old policy $\pi_{\theta_{\mathrm{old}}}$ samples a group of $K$ responses $\{y^1,\ldots,y^K\}$ and assigns each response a reward $r^k=r(x,y^k)$. GRPO estimates the advantage of each response by normalizing rewards within the group: $ \hat{A}^k = \frac{ r^k-\operatorname{mean}(\{r^j\}_{j=1}^{K}) }{ \operatorname{std}(\{r^j\}_{j=1}^{K})+\epsilon }$, where $\epsilon$ is a small constant for numerical stability. Let $ \rho_{k,t}(\theta) = \frac{ \pi_\theta(y_t^k\mid x,y_{<t}^k) }{ \pi_{\theta_{\mathrm{old}}}(y_t^k\mid x,y_{<t}^k) }$ denote the token-level importance ratio. GRPO optimizes the clipped surrogate objective {\small\begin{equation}\label{eq:grpo}
\mathcal{J}_{\mathrm{GRPO}}(\theta) = \mathbb{E}\!\left[ \frac{1}{K} \sum_{k=1}^{K} \frac{1}{|y^k|} \sum_{t=1}^{|y^k|} \left[ \min\!\left( \rho_{k,t}(\theta)\hat{A}^k, \operatorname{clip}\!\left( \rho_{k,t}(\theta), 1-\delta, 1+\delta \right)\hat{A}^k \right) - \beta\mathcal{D}_{\mathrm{KL}}^{k,t} \right] \right], \end{equation}}%
where $\delta$ is the clipping threshold, $\beta$ controls the strength of KL regularization, and $\mathcal{D}_{\mathrm{KL}}^{k,t}$ penalizes deviation from a reference policy $\pi_{\mathrm{ref}}$. GRPO typically uses verifiable rewards from ground-truth answers or external verifiers. However, such supervision can be costly or difficult to obtain at scale. The following subsection considers model-generated rewards as an alternative, enabling reinforcement learning on unlabeled prompts.

\subsection{Self-rewarding RL}
\label{sec:prelim_self_rewarding}
In the absence of external verifiers, reward functions can be constructed directly from a model's own outputs, thereby enabling RL on unlabeled prompts. Given an unlabeled prompt $x$, the policy $\pi_\theta$ samples responses ${y^1,\ldots,y^K}$, with answers $a^k=g(y^k)$ extracted from each response. TTRL~\citep{zuo2025ttrl} defines the reward for response $y^k$ as $r_{\mathrm{maj}}^k=\mathbf{1}[a^k=\hat a_\theta(x)]$, where $\hat a_\theta(x)=\arg\max_b \sum_{j=1}^{K}\mathbf{1}[a^j=b]$ is the policy's majority-vote answer. Intuitor~\citep{intuitor2025} instead measures token-level confidence using $r_{\mathrm{conf}}^k=\frac{1}{|y^k|}\sum_{t=1}^{|y^k|}D_{\mathrm{KL}}(\mathbf{U}\Vert\pi_\theta(\cdot\mid x,y_{<t}^k))$, where $\mathbf{U}$ denotes the uniform distribution over the vocabulary. RENT~\citep{rent2025} operates on the predictive distribution, assigning $r_{\mathrm{ent}}^k=-\frac{1}{|y^k|}\sum_{t=1}^{|y^k|}\mathcal{H}(\pi_\theta(\cdot\mid x,y_{<t}^k))$, where $\mathcal{H}(\cdot)$ denotes entropy. 
Thus, TTRL~\citep{zuo2025ttrl} rewards responses that agree with the policy's majority-vote prediction, whereas Intuitor~\citep{intuitor2025} and RENT~\citep{rent2025} derive rewards from the model's token-level predictive distributions.
Despite their different reward constructions, these methods ultimately produce scalar rewards that can be used for policy optimization. Under GRPO, the self-generated rewards ${r^k}_{k=1}^K$ are normalized into group-relative advantages $\hat A^k$ and used directly in Eq.~\eqref{eq:grpo}.

Despite their effectiveness, these signals originates from the same policy being optimized: without an external reference, training may reinforce existing biases and suboptimal behaviors, reduce response diversity, and eventually lead to homogenized responses or training collapse. We show this in (b) and (c) of Figure \ref{fig:method_panels}, where prolonged training causes TTRL to degenerate and lead to training collapse.

\begin{figure}[h]
    \centering
    \includegraphics[width=1\linewidth]{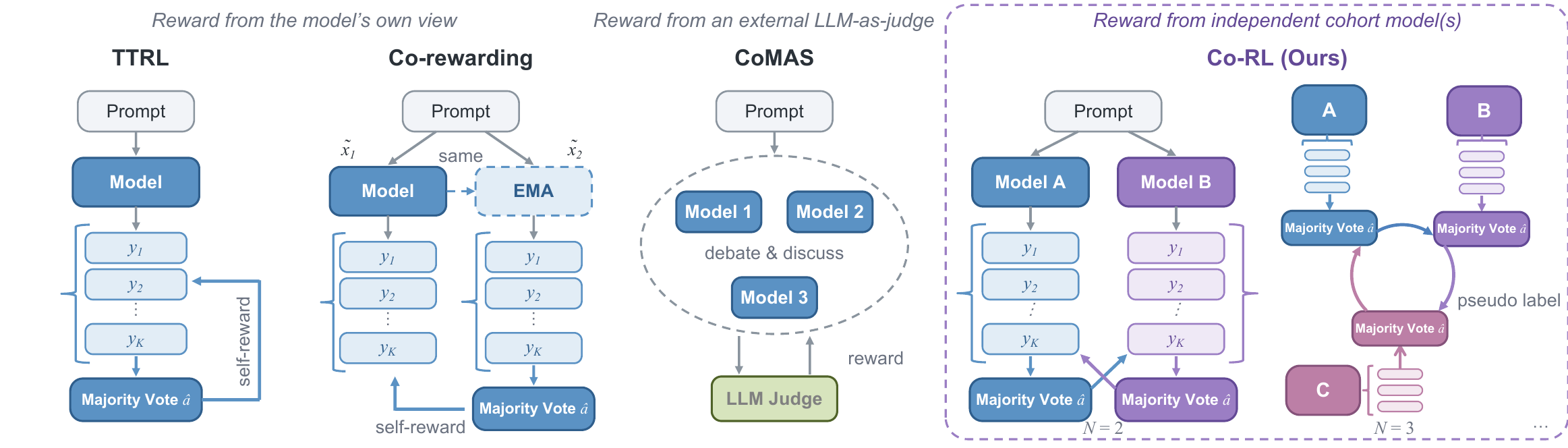}
    \caption{Comparison of \method with prior label-free RL methods. Both TTRL
and Co-rewarding derive rewards from self-generated agreement, and CoMAS
scores multi-turn interactions with one of its own agents acting as judge.
\method instead derives rewards directly from peer votes. Beyond two agents, the votes pass along a
directed ring ($N{=}3$ shown).}
    \label{fig:comp_ours_prior}
\end{figure}
\section{Method}
\label{sec:method}
\subsection{Co-Reinforcement Learning (Co-RL)}
\label{sec:colearn}

This motivates a fundamental question: \emph{how can a model obtain a sufficiently independent learning signal without any ground-truth supervision?}
Our key insight is that such a signal can emerge from independently trained models, since their errors are decorrelated; one model can provide corrective feedback that another cannot derive from its own generations \citep{blum1998cotraining, li2023diverse}. We provide an overview of our method in the right of Figure \ref{fig:comp_ours_prior}, in comparison with prior self-rewarding RL paradigms.

Building on this principle, we introduce Co-Reinforcement Learning (\method), a label-free multi-agent reinforcement learning method in which multiple decoupled agents independently generate completions to unlabeled problems and supervise one another through majority-voted pseudo reward from its cohort. The agents share neither parameters nor gradients; their optimization is coupled solely through the rewards they provide to one another. 

Formally, let $x\sim\mathcal{D}$ denote an unlabeled reasoning problem. We consider a cohort of $N$ agents with independently parameterized policies $\{\pi_{\theta_n}\}_{n=1}^{N}$. The agents may be initialized from the same pretrained model or from models of different families and sizes.
 For each unlabeled reasoning problem $x$, each
agent $n\in\{1,\ldots,N\}$ independently rollout a group of $K$ completions $y_n^k
    \overset{\mathrm{i.i.d.}}{\sim}
    \pi_{\theta_n}(\cdot\mid x),
    k=1,\ldots,K$
and extracts the corresponding final answers as $a_n^k=g(y_n^k)$,
with $g(\cdot)$ denoting an answer-extraction function.

For each agent $n$, \method constructs a supervision target exclusively from
the answers generated by one designated peer. Specifically, the pseudo-label is constructed as

{\small
\begin{equation}
    \hat{a}_{-n}(x)
    \in
    \arg\max_{b}
    \sum_{j=1}^{K}
    \mathbf{1}\!\left[a_{n-1}^j=b\right],
    \label{eq:ring_pseudo_label}
\end{equation}}%
where $b$ ranges over the answers produced by agent $n-1$, with the index
taken cyclically so that agent $1$ is supervised by agent $N$.
Thus, $\hat{a}_{-n}(x)$ represents the majority-vote answer of the peer
that supervises agent $n$. The reward assigned to the $k$-th response of agent $n$ is then $
    r_n^k
    =
    \mathbf{1}\!\left[
        a_n^k=\hat{a}_{-n}(x)
    \right]
$.
A response therefore receives reward $1$ when its extracted answer agrees with the constructed pseudo-label, and reward $0$ otherwise. In the two-agent setting, the two agents supervise each other. Crucially, each agent does not contribute to its own supervision target $\hat{a}_{-n}(x)$.

\textbf{Policy Optimization.} 
Without loss of generality, we use GRPO~\citep{shao2024deepseekmath} to train our agents. 
For each agent, the rewards $\{r_n^k\}_{k=1}^{K}$ are normalized within its own rollout group to obtain group-relative advantages, which are then used in the GRPO objective in Eq.~\ref{eq:corl_objective} to optimize the agent.
Let $\boldsymbol{\theta}=(\theta_1,\ldots,\theta_N)$. The overall training objective can be written as
\begin{equation}
    \max_{\boldsymbol{\theta}}
    \;
    \mathcal{J}_{\mathrm{Co\text{-}RL}}(\boldsymbol{\theta})
    =
    \frac{1}{N}
    \sum_{n=1}^{N}
    \mathcal{J}_{\mathrm{GRPO}}
    \left(
        \theta_n;
        \{y_n^k,r_n^k\}_{k=1}^{K}
    \right),
    \label{eq:corl_objective}
\end{equation}
where
{\small
\begin{equation}
\begin{aligned}
\mathcal{J}_{\mathrm{GRPO}}(\theta_n)
=
\mathbb{E}\Bigg[
\frac{1}{K}
\sum_{k=1}^{K}
\frac{1}{|y_n^k|}
\sum_{t=1}^{|y_n^k|}
\min\Big(
\rho_{n,t}^k \hat{A}_n^k,\,
\operatorname{clip}(
\rho_{n,t}^k,1-\epsilon,1+\epsilon
)
\hat{A}_n^k
\Big)
\Bigg]
-
\beta
D_{\mathrm{KL}}
\left(
\pi_{\theta_n}
\,\Vert\,
\pi_{\mathrm{ref},n}
\right).
\end{aligned}
\label{eq:corl_grpo}
\end{equation}}%
{\small
\begin{equation}
\hat{A}_n^k
=
\frac{
r_n^k-\operatorname{mean}\!\left(\{r_n^j\}_{j=1}^{K}\right)
}{
\operatorname{std}\!\left(\{r_n^j\}_{j=1}^{K}\right)
},
\qquad
\rho_{n,t}^k
=
\frac{
\pi_{\theta_n}
\left(
y_{n,t}^k
\mid
x_n,y_{n,<t}^k
\right)
}{
\pi_{\theta_n^{\mathrm{old}}}
\left(
y_{n,t}^k
\mid
x_n,y_{n,<t}^k
\right)
}.
\label{eq:corl_advantage}
\end{equation}}%
Figure~\ref{fig:overview} illustrates the two-agent setting. At each training step, all rollouts and majority-vote pseudo-labels are computed before any policy update. Given the resulting rewards, each policy is then updated independently.

Algorithm~\ref{alg:colearn} summarizes the training procedure, in which all agents are updated at each optimization step. Our framework is also compatible with other policy optimization methods that support sequence-level rewards.

\begin{algorithm}[h]
\caption{Co-Reinforcement Learning (Co-RL)}
\label{alg:colearn}
\begin{algorithmic}[1]
\Require Agents $\{\pi_{\theta_n}\}_{n=1}^{N}$ initialized from different models
\For{each training step}
    \State Sample a batch of unlabeled prompts $\mathcal{B}\subset\mathcal{D}$
    \For{$x\in\mathcal{B}$}
        \For{$n=1,\ldots,N$} \Comment{Generate responses}
            \State Sample $y_n^k \overset{\mathrm{i.i.d.}}{\sim}
            \pi_{\theta_n}(\cdot\mid x)$  and extract answers $a_n^k \gets g(y_n^k)$ for $k=1,\ldots,K$
        \EndFor
        \For{$n=1,\ldots,N$} \Comment{Construct cross-agent rewards}
            \State Compute $\hat a_{-n}(x)$ using
            Eq.~\eqref{eq:ring_pseudo_label}
            \For{$k=1,\ldots,K$}
                \State Assign
                $r_n^k \gets
                \mathbf{1}\!\left[a_n^k=\hat a_{-n}(x)\right]$
            \EndFor
        \EndFor
    \EndFor
    \State Update $\{\theta_n\}_{n=1}^{N}$ with one GRPO step
    using the objective in~\eqref{eq:corl_objective}
\EndFor
\end{algorithmic}
\end{algorithm}

\subsection{Diverse Cohorts Enable Unsupervised Reasoner}
\label{sec:diversity}
What a cohort teaches depends on how different its mistakes are.
Highly similar models tend to make correlated errors and may reinforce the same incorrect answers. Therefore, cohort diversity plays a crucial role in \method. We push this diversity as far as the framework allows: policy optimization (Sec \ref{sec:colearn}), model families and sizes, and input formation.
Each source of diversity induces distinct inductive biases and reasoning behaviors, reducing correlated errors and strengthening the corrective signal available through peer supervision.

\textbf{Decoupled policy optimization.} As discussed in Sec.~\ref{sec:colearn}, \method realizes decoupled policy optimization by training two policies independently, with interaction occurring only through the reward in Eq.~\eqref{eq:corl_objective}. Each policy maintains its own parameters and optimizer state with no gradient propagated between policies. Thus, this decoupled policy optimization design also serves as a source of diversity: independent updates prevent the two policies from being directly coupled, maintaining less-correlated predictions throughout the training process. We show in Figure~\ref{fig:method_panels}(b) and (c) that decoupled policy optimization stabilizes training and yields consistent improvements over self-rewarding methods.


\textbf{Diversity across model families and sizes.}
Our primary source of diversity comes from independently pretrained model families. Distinct families differ throughout the model-development pipeline, including architecture, tokenization, pretraining data, and post-training, each of which brings inductive biases that can benefit co-learning.   For example, Qwen2.5 and Llama 3 differ in their tokenizers, vocabulary sizes, architectural choices, and pretraining corpora \citep{yang2024qwen25,dubey2024llama3}, while Gemma introduces a 256K-token vocabulary and interleaved local--global attention \citep{gemma2025}. Such differences are even more pronounced for VLMs, whose families employ distinct vision encoders: Qwen2.5-VL uses a natively trained dynamic-resolution ViT \citep{bai2025qwenvl}, InternVL adopts InternViT \citep{chen2024internvl}, and Gemma~3 uses SigLIP \citep{gemma2025}. Appendix~\ref{app:decoupling} further quantifies this diversity through the error-overlap ratios across model families. As shown in Figure~\ref{fig:method_panels}(d), models from different families exhibit substantially less overlap in their errors. This complementary error structure is reflected by lower inter-model agreement during pseudo-labeling (Figure~\ref{fig:method_panels}(a)) and, in turn, yields more accurate pseudo-labels (Figure~\ref{fig:method_panels}(b)) and higher performance (Figure~\ref{fig:method_panels}(c)) throughout training.

\begin{figure}[h]
  \centering
  \includegraphics[width=0.95\linewidth]{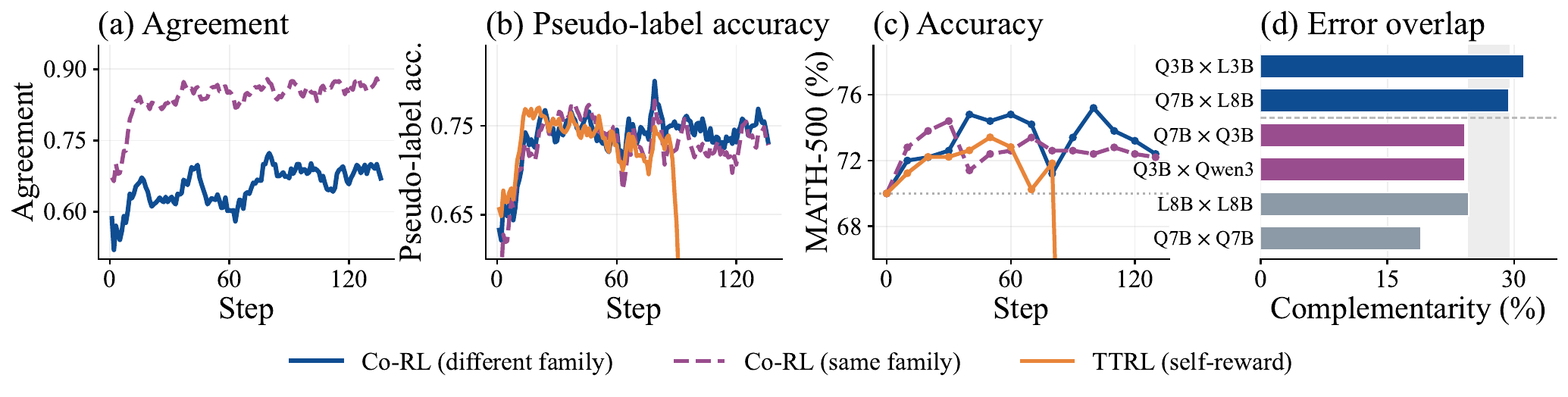}
  \caption{(a) Agreement between the two models, (b) pseudo-label accuracy, and (c) evaluation performance; (d) Error overlap before RL for two pairs each from a different family, the same family, and the same model under a different seed.}
  \label{fig:method_panels}
\end{figure}
Model size provides an additional, orthogonal source of diversity. Models with different capacities exhibit distinct reasoning and prediction behaviors. Pairing models of different sizes therefore yields different error profiles, allowing each policy to provide informative reward signals on examples that the other does not solve correctly. The three-agent run in Section~\ref{sec:main} trains models of different sizes together.


\textbf{Diversity across input formation.} Despite model- and optimization-level decoupling, both agents are still trained on identical prompts. We further diversify their training data by rewriting each MATH problem with DeepSeek-V3 \citep{deepseekv3}, training one agent on the original prompt and the other on its rewrite. The rewrite preserves the answer and sample order while typically recasting the problem into a different concrete scenario rather than performing superficial lexical substitutions. The two agents therefore solve semantically equivalent problems expressed in different forms, reducing correlated errors induced by prompt-specific phrasing. Appendix~\ref{app:rephrase} provides examples.

We provide an illustration of diversified \method in Figure \ref{fig:overview}.

\begin{figure}[t]
  \centering
  \includegraphics[width=0.75\textwidth]{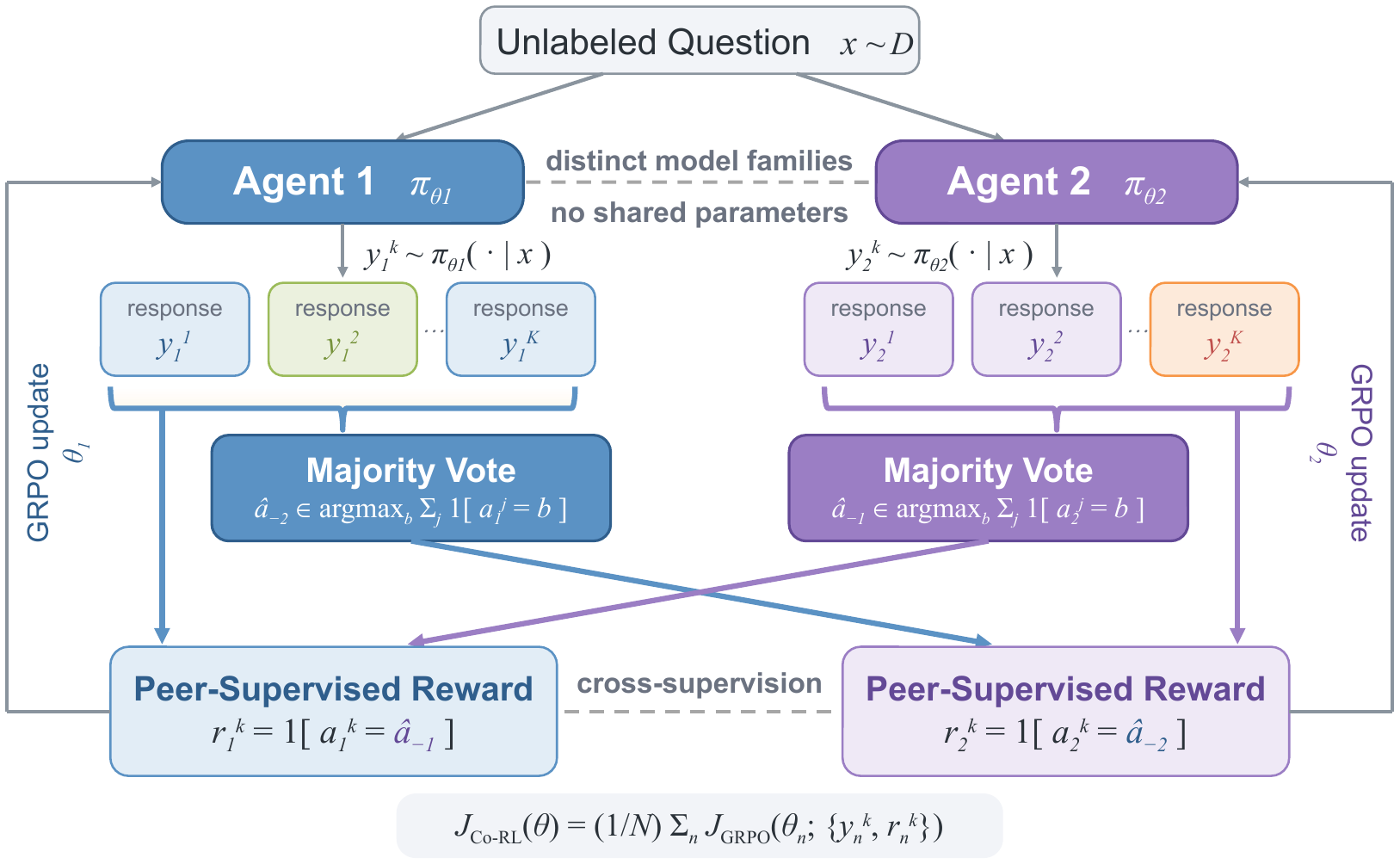}
  \caption{Overview of \method with two agents. Each agent samples $K$ responses
to the same unlabeled question and generates a pseudo label with majority vote. Each
rollout is then rewarded by agreement with the cohort's pseudo label, and updates its own policy, with no sharing parameters and no gradient exchange except the cross-reward process.}
  \label{fig:overview}
\end{figure}

\section{Theoretical Analysis}
\label{sec:theory}

In this section, we theoretically characterize the learning dynamics of \method and compare them with self-rewarding GRPO, where each agent uses its own majority vote as the pseudo-label.
A key limitation of self-rewarding is that an agent learns from a pseudo-label derived from its own predictions. As a result, when the agent is systematically wrong, its supervision signal is likely to reinforce the same error rather than correct it. In contrast, \method derives supervision from other agents, allowing one agent’s correct prediction to provide a corrective signal for another agent’s mistake.
Specifically, we ignore clipping and the KL term for simplicity. Our analysis shows that \method can exploit complementary strengths across agents to correct errors that self-rewarding would otherwise reinforce, thereby expanding the set of prompts that converge to the correct answer.

To obtain a tractable characterization, we consider a fixed prompt $x$ and
reduce the induced answer distribution to two outcomes: the correct answer
$a^\star$ and an aggregate incorrect answer. Let
$p_n=\Pr_{\pi_{\theta_n}}(a=a^\star\mid x)$ denote the \textit{probability that agent
$n$ assigns to the correct answer}; in the single-agent setting, we simply write
$p$. We assume an odd number $K$ of independently sampled rollouts, so that the
majority vote has no ties, and let $\eta$ denote the GRPO update rate. Our experiments use an even
$K$, where a tie is resolved deterministically rather than discarded.

\subsection{Comparison of training dynamics under self-rewarding and cross-agent supervision.}
We study how the probability of generating the correct answer $p$ evolves during training. 
\begin{proposition}[Training dynamics under self-rewarding and cross-agent supervision]
\label{prop:comparison_dynamics}
Under the above definitions, the probability dynamics take the following
forms.

\textbf{[1] Self-rewarding.}
Let
$C\sim\operatorname{Bin}(K,p)$ denote the number of correct responses among
the $K$ rollouts. The probability dynamics satisfy
\begin{equation}
\dot p
=
\eta\,p(1-p)
\mathbb{E}_{C\sim\operatorname{Bin}(K,p)}
\left[
\operatorname{sign}\!\left(C-\frac{K}{2}\right)
\frac{\sqrt{C(K-C)}}{K}
\right].
\label{eq:self_dynamics_main}
\end{equation}

\textbf{[2] {\upshape\method}: Cross-agent supervision.}
For agent $n$, let
$C_n\sim\operatorname{Bin}(K,p_n)$ denote the number of correct responses
among its $K$ rollouts, and let
$Z_{-n}=\mathbf{1}[\hat a_{-n}(x)=a^\star]$ indicate whether the pseudo-label
constructed from the other agents is correct. The probability dynamics satisfy
\begin{equation}
\dot p_n
=
\eta_n p_n(1-p_n)
\mathbb{E}
\left[
\operatorname{sign}\!\left(Z_{-n}-\frac{1}{2}\right)
\right]
\mathbb{E}_{C_n\sim\operatorname{Bin}(K,p_n)}
\left[
\frac{\sqrt{C_n(K-C_n)}}{K}
\right].
\label{eq:corl_dynamics_general_main}
\end{equation}

In the two-agent setting, define
$\phi_K(p)=2V_K(p)-1$ as the signed majority-vote direction and
$
q_K(p)
=
\eta p(1-p)
\mathbb{E}_{C\sim\operatorname{Bin}(K,p)}
\left[
\frac{\sqrt{C(K-C)}}{K}
\right]
>0
$
as the positive update magnitude.
Then Eq.~\eqref{eq:corl_dynamics_general_main} reduces to
\begin{equation}
\dot p_A
=
q_K(p_A)\phi_K(p_B),
\qquad
\dot p_B
=
q_K(p_B)\phi_K(p_A).
\label{eq:corl_two_agent_main}
\end{equation}
\end{proposition}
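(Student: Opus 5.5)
Model the two-outcome policy with a single logit $\theta$, i.e.\ $p=\sigma(\theta)$, and treat one GRPO step, without clipping or KL, as the continuous-time gradient flow $\dot\theta=\eta\,\mathbb{E}[\frac1K\sum_k \hat A^k\,\nabla_\theta\log\pi_\theta(a^k)]$ evaluated at $\theta_{\mathrm{old}}=\theta$. We read off $\dot p=p(1-p)\dot\theta$. The score of a correct answer is $\nabla_\theta\log p=1-p$, and that of an incorrect one is $\nabla_\theta\log(1-p)=-p$.

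\textbf{Per-batch update.} Condition on the number $C$ of correct rollouts and on the correctness of the pseudo-label. Rewards are binary, so there are two cases.
\begin{itemize}
\item If the pseudo-label is correct, the reward vector equals the correctness indicator. Its group mean is $C/K$ and its (population) std is $\sqrt{C(K-C)}/K$. The advantages are $(1-C/K)/s$ for correct rollouts and $-(C/K)/s$ for incorrect ones, with $s=\sqrt{C(K-C)}/K$.
\item If the pseudo-label is wrong, all rewards flip, so all advantages flip sign.
\end{itemize}
In the first case, $\frac1K\sum_k\hat A^k\,\partial_\theta\log\pi$ equals
\[
\frac1{Ks}\Big[C(1-\tfrac CK)(1-p)+(K-C)\tfrac CK\,p\Big]=\frac{C(K-C)}{K^2 s}=\frac{\sqrt{C(K-C)}}{K}.
\]
This collapses nicely because $(1-p)+p=1$. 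When $C\in\{0,K\}$ the std vanishes and the advantage is $0$, which matches $\sqrt{C(K-C)}=0$.

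The per-batch $\theta$-increment is therefore $\operatorname{sign}(Z-\tfrac12)\,\sqrt{C(K-C)}/K$, where $Z$ indicates a correct pseudo-label.

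\textbf{Self-rewarding.} Here $Z=\mathbf 1[C>K/2]$, since $K$ is odd and there are no ties. Taking the expectation over $C\sim\mathrm{Bin}(K,p)$ and multiplying by $\eta p(1-p)$ gives Eq.~\eqref{eq:self_dynamics_main}.

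\textbf{Cross-agent supervision.} $Z_{-n}$ is a function only of the peer's rollouts. Those are sampled independently of agent $n$'s rollouts, and all samples are drawn before any update. So $Z_{-n}$ is independent of $C_n$ and the expectation factorizes, which gives Eq.~\eqref{eq:corl_dynamics_general_main} with rate $\eta_n$.

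\textbf{Two-agent reduction.} Interpret $V_K(p)=\Pr(\mathrm{Bin}(K,p)>K/2)$ as the probability that the majority vote is correct. Then $\mathbb{E}[\operatorname{sign}(Z_{-A}-\tfrac12)]=V_K(p_B)-(1-V_K(p_B))=\phi_K(p_B)$. Substituting gives Eq.~\eqref{eq:corl_two_agent_main}, and symmetrically for agent $B$.

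Positivity of $q_K$ for $p\in(0,1)$ and $K\ge2$ holds because $\Pr(0<C<K)>0$.

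\textbf{Main obstacle.} The step needing the most care is justifying the continuous-time/expected-gradient idealization and the std convention. We use the population std with $\epsilon=0$, and take zero advantage for degenerate groups, to get the clean $\sqrt{C(K-C)}/K$ form. This is also where the modeling choice of a single-logit parametrization enters. I would state these conventions explicitly as part of the stated simplifications, alongside ignoring clipping and KL. The rest is the direct computation above.
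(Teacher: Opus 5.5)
Your proposal is correct and follows essentially the same route as the paper. Both use a log-odds (single-logit) parametrization, compute the fixed-pseudo-label GRPO increment $(2Z-1)\sqrt{C(K-C)}/K$, specialize $Z$ to the self-vote $\mathbf{1}[C>K/2]$ or to the independent peer vote, factorize the expectation, and identify $\mathbb{E}[\operatorname{sign}(Z_{-A}-\tfrac12)]=2V_K(p_B)-1=\phi_K(p_B)$. The only difference is bookkeeping: the paper writes $r^k=(1-Z)+(2Z-1)X^k$ and cancels the $p$-terms via $\sum_k (X^k - C/K)=0$, while you split into the two cases of $Z$ and use $(1-p)+p=1$; your explicit justification of $q_K>0$ and of the $\epsilon=0$ convention is a small addition the paper leaves implicit.
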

\begin{proof}
    A complete proof is provided in Appendix~\ref{ap:prop1_comparison}. 
\end{proof}

Proposition~\ref{prop:comparison_dynamics} highlights the structural difference between the two training mechanisms. Under self-rewarding, the pseudo-label is constructed from the same rollout group being optimized, making the update \emph{self-confirming}: so the agent reinforces whichever answer is currently more likely, whether correct or not. In contrast, \method decouples supervision from the optimized agent: since
$q_K(p_A)>0$, the update direction of agent $A$ is determined entirely by
$\phi_K(p_B)$, the supervision signal provided by agent $B$.

\subsection{Cross-agent supervision enlarges the basin of correct convergence.}
Following Proposition~\ref{prop:comparison_dynamics}, Proposition~\ref{prop:self_confirming} shows that self-rewarding is \emph{self-confirming}: the expected GRPO update amplifies the currently favored answer, regardless of its correctness. Thus, when $p<1/2$, self-rewarding further suppresses the correct answer instead of correcting the error.
\begin{proposition}[Self-confirming dynamics]
\label{prop:self_confirming}
For odd $K$,
$\operatorname{sign}(G_K^{\mathrm{self}}(p))
=
\operatorname{sign}(p-\frac12)$.
Consequently,
$p(0)<\frac12 \Rightarrow p(t)\rightarrow0$ and
$p(0)>\frac12 \Rightarrow p(t)\rightarrow1$.
\end{proposition}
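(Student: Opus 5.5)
Write $G_K^{\mathrm{self}}(p)$ for the right-hand side of Eq.~\eqref{eq:self_dynamics_main} from Proposition~\ref{prop:comparison_dynamics}, so that $\dot p = G_K^{\mathrm{self}}(p)$. The plan has two steps: first determine the sign of the bracketed binomial expectation, then read off convergence from the one-dimensional ODE.

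\textbf{Step 1: the sign of $G_K^{\mathrm{self}}$.} For $p\in(0,1)$ the prefactor $\eta p(1-p)$ is positive, so the sign of $G_K^{\mathrm{self}}(p)$ is the sign of
\[
S(p)=\sum_{c=0}^{K}\binom{K}{c}p^c(1-p)^{K-c}\,\operatorname{sign}\!\left(c-\tfrac K2\right)w_c,
\qquad w_c=\frac{\sqrt{c(K-c)}}{K}.
\]
Because $K$ is odd, no $c$ equals $K/2$, so every term carries a sign of $\pm1$. Also $w_c=w_{K-c}$. Pair each $c>K/2$ with $K-c<K/2$ to get
\[
S(p)=\sum_{c>K/2}\binom{K}{c}w_c\,\bigl(p^c(1-p)^{K-c}-p^{K-c}(1-p)^{c}\bigr).
\]
Factor each bracket as $p^{K-c}(1-p)^{K-c}\bigl(p^{2c-K}-(1-p)^{2c-K}\bigr)$. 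Since $2c-K\ge1$, this has the sign of $p-(1-p)$, that is, the sign of $p-\tfrac12$.

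Some terms have $w_c=0$, namely $c=K$ (paired with $c=0$). The case $K=1$ needs separate care: there all weights vanish and $G\equiv0$. For $K\ge3$, the term $c=(K+1)/2$ has $w_c>0$. Hence $S(p)$ is strictly positive for $p>1/2$, strictly negative for $p<1/2$, and zero at $p=1/2$. This gives $\operatorname{sign}(G_K^{\mathrm{self}}(p))=\operatorname{sign}(p-\tfrac12)$ on $(0,1)$. At $p\in\{0,1\}$ the prefactor makes $G$ vanish, and these are fixed points. I would state the $K\ge3$ assumption explicitly, or note the degenerate $K=1$ case.

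\textbf{Step 2: convergence.} $G$ is a polynomial in $p$, hence locally Lipschitz, so the ODE has unique solutions and $[0,1]$ is invariant. The equilibria in $[0,1]$ are exactly $0$, $\tfrac12$ and $1$. Take $p(0)\in(\tfrac12,1)$. By uniqueness the trajectory cannot cross the equilibria $\tfrac12$ or $1$, so $p(t)$ stays in $(\tfrac12,1)$. There $\dot p>0$, so $p(t)$ is monotone increasing and bounded, and it converges to some limit $L$. The limit of a bounded monotone solution of an autonomous scalar ODE must be an equilibrium; otherwise $\dot p$ would stay bounded away from zero near $L$. Since $L>\tfrac12$, we get $L=1$. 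The case $p(0)\in(0,\tfrac12)$ is symmetric and gives $p(t)\to0$.

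\textbf{Main obstacle.} The only delicate point is the strictness of the sign in Step 1. The weights $w_c$ vanish at the extremes, so I need at least one pair with nonzero weight, which holds exactly when $K\ge3$. The rest is a standard monotone-convergence argument for a scalar ODE.
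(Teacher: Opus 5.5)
Your proof is correct and follows essentially the same route as the paper: you pair each $c>K/2$ with $K-c$ using $w_c=w_{K-c}$, read off the sign by comparing $p^{2c-K}$ with $(1-p)^{2c-K}$ (the paper uses the equivalent ratio $(p/(1-p))^{2c-K}$), and then conclude by monotone convergence of the scalar ODE to an equilibrium. Your two refinements are genuine. First, the paper's paired sum $\sum_{c=(K+1)/2}^{K-1}$ is empty when $K=1$, so its sign claim silently requires $K\ge 3$, exactly as you note. Second, your uniqueness argument makes explicit that $(0,\tfrac12)$ and $(\tfrac12,1)$ are invariant, which the paper uses without comment.
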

\begin{proof}
    A complete proof is provided in Appendix~\ref{ap:prop_self}. 
\end{proof}

We next state our main result in Theorem~\ref{thm:correct_basin}, which
characterizes the basin of attraction under cross-agent supervision.

\begin{theorem}[Co-RL enlarges the basin of correct convergence]
\label{thm:correct_basin}
Consider the symmetric two-agent dynamics in
Eq.~\eqref{eq:corl_two_agent_main} with interior initialization
$(p_A(0),p_B(0))\in(0,1)^2$.
The correct and incorrect consensus states $(1,1)$ and $(0,0)$ are
asymptotically stable, while $(1/2,1/2)$ is a saddle point whose interior
separatrix is $p_A+p_B=1$. Consequently,
$$
p_A(0)+p_B(0)>1
\quad\Longrightarrow\quad
(p_A(t),p_B(t))\rightarrow(1,1),
$$
whereas
$$
p_A(0)+p_B(0)<1
\quad\Longrightarrow\quad
(p_A(t),p_B(t))\rightarrow(0,0).
$$
\end{theorem}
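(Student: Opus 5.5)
The plan is to exploit two symmetries of the scalar functions in Eq.~\eqref{eq:corl_two_agent_main}, and then run a phase-plane argument: invariance of the separatrix, sign and monotonicity of the field in each quadrant around $(1/2,1/2)$, and a local analysis at the saddle.

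\textbf{Step 1: properties of $\phi_K$ and $q_K$.} Write $V_K(p)=\Pr[\operatorname{Bin}(K,p)>K/2]$, the probability that the majority vote is correct. For odd $K$:
\begin{itemize}
\item $V_K$ is smooth and strictly increasing, with $V_K(1-p)=1-V_K(p)$.
\item Hence $\phi_K(1-p)=-\phi_K(p)$ and $\operatorname{sign}\phi_K(p)=\operatorname{sign}(p-\tfrac12)$ (this is also the content of Proposition~\ref{prop:self_confirming}).
\item By the symmetry $C\mapsto K-C$, we have $q_K(1-p)=q_K(p)$.
\item $q_K>0$ on $(0,1)$ and $q_K(0)=q_K(1)=0$. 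Therefore $(0,1)^2$ is forward invariant, because each coordinate's velocity vanishes at the boundary.
\end{itemize}
The field is polynomial in $(p_A,p_B)$ apart from the finite sum over $C$, so it is locally Lipschitz and solutions are unique. The equilibria in $[0,1]^2$ are exactly the points with $p_A\in\{0,1\}$ or $p_B=\tfrac12$, and also $p_B\in\{0,1\}$ or $p_A=\tfrac12$. In the interior the only one is $(\tfrac12,\tfrac12)$.

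\textbf{Step 2: the separatrix is invariant.} Along $p_B=1-p_A$ the two symmetries give
\[
\tfrac{d}{dt}(p_A+p_B)=q_K(p_A)\phi_K(1-p_A)+q_K(1-p_A)\phi_K(p_A)=0 .
\]
So the line $p_A+p_B=1$ is invariant, and by uniqueness no trajectory crosses it. Hence $\{p_A+p_B>1\}\cap(0,1)^2$ and $\{p_A+p_B<1\}\cap(0,1)^2$ are each invariant.

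\textbf{Step 3: linearization at $(\tfrac12,\tfrac12)$.} The Jacobian there is $q_K(\tfrac12)\phi_K'(\tfrac12)\begin{pmatrix}0&1\\1&0\end{pmatrix}$. Its eigenvalues are $\pm c$ with $c>0$, with unstable direction $(1,1)$ and stable direction $(1,-1)$. This is a hyperbolic saddle. Its stable manifold is one-dimensional and tangent to $(1,-1)$, so by Step~2 it coincides locally with the line $p_A+p_B=1$. Consequently no trajectory off that line converges to $(\tfrac12,\tfrac12)$.

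\textbf{Step 4: convergence in the upper region.} Start with $p_A(0)+p_B(0)>1$.
\begin{itemize}
\item \emph{Upper box.} The box $(\tfrac12,1)^2$ is forward invariant, since both velocities are positive there. Each coordinate is then monotone and bounded, so the trajectory converges to an equilibrium in $[\tfrac12,1]^2$ with coordinates at least their initial values. The only such equilibrium is $(1,1)$.
\item \emph{Mixed quadrant.} Otherwise, say $p_A>\tfrac12>p_B$ (the case $p_A=\tfrac12$ or $p_B=\tfrac12$ is transient, since the other coordinate moves it off). Here $p_A$ decreases and $p_B$ increases, while Step~2 keeps $p_A+p_B>1$, so $p_A$ stays above $\tfrac12$.
\item \emph{Exiting the mixed quadrant.} If the trajectory never leaves the quadrant, monotonicity forces convergence to an equilibrium in the closed set $\{p_A\ge\tfrac12,\,p_B\le\tfrac12,\,p_A+p_B\ge1\}$. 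The candidates are $(1,0)$, which is excluded because $p_A$ is decreasing from a value below $1$, and $(\tfrac12,\tfrac12)$, which is excluded by Step~3. Hence $p_B$ crosses $\tfrac12$ in finite time, the trajectory enters the invariant upper box, and it converges to $(1,1)$.
\end{itemize}

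\textbf{Step 5: stability and the lower region.} Asymptotic stability of $(1,1)$ follows from the invariance and monotone convergence inside $(\tfrac12,1]^2$, which is a neighborhood of $(1,1)$ relative to the state space. The map $p\mapsto1-p$ in both coordinates, combined with the symmetries of Step~1, conjugates the system to itself. This transfers everything to $p_A+p_B<1$ and to the point $(0,0)$.

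\textbf{Main obstacle.} I expect the main obstacle to be ruling out convergence to the saddle from the mixed quadrant in Step~4. It requires the stable-manifold theorem together with the identification of the stable manifold with the invariant line from Step~2. Linearization cannot be used at $(1,1)$ itself: since $q_K(p)=O((1-p)^2)$ there, the Jacobian vanishes, and this is why the monotone-box argument is used for stability instead.
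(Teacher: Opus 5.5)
Your argument is correct, but its central step differs from the paper's. The paper's main device is a first integral. With $F_K(p)=\int_{1/2}^{p}\phi_K(u)/q_K(u)\,du$, the quantity $F_K(p_A)-F_K(p_B)$ is constant along trajectories. Because $F_K(1-p)=F_K(p)$, and $F_K$ is strictly decreasing on $(0,1/2)$ and increasing on $(1/2,1)$, the sign of this quantity in a disagreement quadrant records which side of $p_A+p_B=1$ the state lies on. Conservation then tells the paper which agent must reach $1/2$ first. The paper uses the invariance of the line and the linearization at $(1/2,1/2)$ only to identify the separatrix and classify the saddle.

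You never build $F_K$. Instead, the invariance of $p_A+p_B=1$ plus uniqueness makes the two open half-regions invariant. That already stops the agent above $1/2$ from dropping to $1/2$ while the other agent is still below it. You then classify the possible monotone limits and exclude asymptotic convergence to the saddle with the stable-manifold theorem.

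What your route buys is explicitness about the finite-time crossing. The paper asserts ``agent $A$ crosses the decision boundary first'' without spelling out why the trajectory cannot instead stall at $(1/2,1/2)$, though its first integral closes this immediately. What the paper's route buys is elementarity and a global picture. Trajectories lie on level curves of $H=F_K(p_A)-F_K(p_B)$, whose zero set in $(0,1)^2$ is exactly the diagonal and the anti-diagonal. Off the separatrix $H$ keeps a nonzero value, while $H(1/2,1/2)=0$, so continuity alone excludes convergence to the saddle, with no stable-manifold theorem needed.

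Your observation that the Jacobian vanishes at $(1,1)$, so that stability must come from the monotone-box argument rather than linearization, matches how the paper handles that point.
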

\begin{proof}
    A complete proof is provided in Appendix~\ref{ap:theorem1}. 
\end{proof}

\begin{takeaway}
\textcolor{takeawaygreen}{\textbf{Takeaway:}}
\textbf{\method leverages complementary strengths to expand correct convergence.}
Consider an extreme example where $(p_A,p_B)=(0.9,0.2)$ on half of the prompts and $(p_A,p_B)=(0.2,0.9)$ on the other half. Each agent has average accuracy $0.55$, but their strengths are perfectly complementary. Under self-rewarding, each agent succeeds only on the half of prompts where $p>1/2$, yielding a final accuracy of $0.5$. In contrast, under \method, $p_A+p_B=1.1>1$ on every prompt, so Theorem~\ref{thm:correct_basin} predicts that both agents converge to the correct answer on all prompts, achieving accuracy $1.0$. Thus, cross-agent supervision can exploit complementary expertise to correct errors that self-rewarding would otherwise reinforce.
\end{takeaway}

\definecolor{cqwen}{RGB}{123,80,162}     
\definecolor{cllama}{RGB}{0,114,178}     
\definecolor{cgemma}{RGB}{230,159,0}     
\definecolor{cintern}{RGB}{0,158,115}    
\definecolor{cneutral}{RGB}{90,110,132}  
\definecolor{ccomas}{RGB}{205,60,66}     

\definecolor{cqwen}{RGB}{123,80,162}    
\definecolor{cllama}{RGB}{0,114,178}    
\definecolor{ccomas}{RGB}{205,60,66}    
\definecolor{corange}{RGB}{232,126,42}  
\definecolor{cgain}{RGB}{20,120,50}     
\definecolor{cdrop}{RGB}{190,55,55}     

\section{Experiments}
\label{sec:experiments}
\subsection{Experimental Setup}
\textbf{Datasets.} For the main experiments, language models are trained
on the level 3 to 5 split of MATH \citep{hendrycks2021math}, following
MARTI \citep{marti2026}. Vision-language models are trained on two multimodal math datasets. We use MMR1-Math \citep{mmr1_2025} to match the training data of our baseline, MM-UPT \citep{mmupt2025}, and additionally use multimodal-open-r1~\citep{multimodal_openr1_2025} to verify that the observed gains are not specific to a particular training dataset.

\textbf{Models.} For language models, we consider Qwen3-1.7B~\citep{qwen3}, Qwen2.5-3B paired with Llama-3.2-3B-Instruct, and Qwen2.5-7B paired with Llama-3.1-8B-Instruct~\citep{yang2024qwen25,dubey2024llama3}. For vision-language models, we use Qwen2.5-VL (3B, 7B), InternVL3.5 (2B, 8B), and Gemma-3 (4B, 12B)~\citep{bai2025qwenvl,chen2024internvl,gemma2025}, with models paired at comparable scales.

\textbf{Baselines.} On language models, we compare our method against
state-of-the-art label-free self-rewarding methods, including TTRL
\citep{zuo2025ttrl}, Intuitor \citep{intuitor2025}, RENT \citep{rent2025}
and Co-rewarding-II \citep{corewarding2025}. On vision-language models, we compare with the
recent self-rewarding method MM-UPT \citep{mmupt2025}, which adopts the TTRL
reward formulation. We also include GRPO with
ground-truth rewards (GT-Reward) as a supervised
reference~\citep{shao2024deepseekmath}.  For multi-agent RL baselines, we compare against MAPoRL~\citep{maporl2025} and CoMAS~\citep{comas2025}.

\textbf{Training details.} All runs use the AdamW optimizer and sample rollouts at a temperature of 1.0. Following
Co-rewarding \citep{corewarding2025}, language models use a learning rate
of $3\times10^{-6}$, an effective batch of 128 prompts per agent and a
3072-token cap, and train with $K=12$ responses per prompt for 2 epochs.
Following R1-V \citep{chen2025r1v}, vision-language models use a learning
rate of $1\times10^{-6}$, a 1024-token cap and $K=8$ responses per prompt
for 1 epoch. Because the rollout and policy distributions drift apart on Gemma-3, the vision-language runs additionally apply a
token-level importance-sampling correction. All experiments use a single
node of eight H100 GPUs, four per agent.

\textbf{Evaluation.} Language models are scored on seven reasoning
benchmarks. For mathematics, we use GSM8K \citep{cobbe2021gsm8k}, MATH-500
\citep{lightman2023letsverify} and AMC \citep{aimo2024amc}. For code
generation, we use HumanEval \citep{chen2021humaneval}, MBPP
\citep{austin2021mbpp} and LiveCodeBench \citep{jain2024livecodebench}. For
science, we use GPQA \citep{rein2024gpqa}. Vision-language models are
scored on four multimodal math benchmarks, MathVision
\citep{wang2024mathvision}, MathVerse \citep{zhang2024mathverse}, MathVista
\citep{lu2024mathvista} and We-Math \citep{qiao2024wemath}. 
Detailed experimental settings and the complete results are provided in Appendix~\ref{ap:exp}. 

\subsection{Main Results}
\label{sec:main}

\textbf{\method outperforms all self-rewarding methods on language models.} We first compare \method against single-agent self-rewarding baselines. Table~\ref{tab:llm3b} reports results against TTRL, RENT, Intuitor, and Co-Rewarding-II, with GRPO using ground-truth rewards included as a supervised reference. Appendix~\ref{app:llm78b} extends the same comparison to 7B and 8B models. We evaluate three variants of our framework: Same family jointly trains two independent agents initialized from the same base model; Different family pairs one agent from each of the two model families; Different family+ further applies data decoupling.
Notably, the readily accessible Same family setting already yields substantial gains over the base models, improving the average performance by 8.0\% and 4.0\% for Qwen2.5-3B and Llama-3.2-3B-Instruct, respectively. Introducing cross-family diversity provides further benefits overall, and, when combined with data decoupling (Different family+), achieves the strongest label-free average performance for both model families.

\definecolor{cavg}{RGB}{221,235,247}   
\begin{table*}[t]\centering
\setlength{\tabcolsep}{4.0pt}\renewcommand{\arraystretch}{1.12}
\resizebox{\textwidth}{!}{%
\begin{tabular}{lcccccccc}
\toprule
Method & GSM8K & MATH500 & AMC & HEval & GPQA & MBPP & LCB & \cellcolor{cavg}\textbf{Avg} \\
\midrule
\multicolumn{9}{c}{\textit{Qwen2.5-3B}}\\
\midrule
Base & 73.4 & 56.6 & 28.9 & 39.0 & 21.2 & 52.2 & 13.7 & \cellcolor{cavg}40.7 \\
GT-Reward & 76.2 & 64.6 & 36.1 & 65.2 & 20.7 & 54.4 & 14.5 & \cellcolor{cavg}47.4 \\
\cmidrule(lr){1-9}
TTRL & \underline{80.4} & 66.4 & 31.3 & 63.4 & 22.2 & 51.8 & 15.9 & \cellcolor{cavg}47.3 \\
RENT & 75.6 & 62.8 & 31.3 & 59.2 & 18.2 & 52.4 & 14.5 & \cellcolor{cavg}44.9 \\
Intuitor & 74.9 & 64.2 & 26.5 & 59.8 & \textbf{27.3} & 50.4 & \underline{16.4} & \cellcolor{cavg}45.6 \\
Co-rewarding-II & 75.5 & 63.4 & 30.1 & 61.0 & 24.8 & 53.2 & 11.0 & \cellcolor{cavg}45.6 \\
\method (Same family) & 78.5 & 66.0 & \textbf{37.4} & \textbf{65.8} & 22.2 & \underline{56.0} & 15.2 & \cellcolor{cavg}\underline{48.7} \\
\method (Different family) & 80.1 & \textbf{66.8} & 33.7 & \underline{64.0} & 22.7 & \textbf{56.8} & 15.2 & \cellcolor{cavg}48.5 \\
\method (Different family+) & \textbf{81.0} & \underline{66.6} & \underline{36.1} & 62.8 & \underline{25.8} & 55.6 & \textbf{17.2} & \cellcolor{cavg}\textbf{49.3} \\
\midrule
\multicolumn{9}{c}{\textit{Llama-3.2-3B-Instruct}}\\
\midrule
Base & 73.6 & 43.8 & 18.1 & 51.2 & 21.2 & 50.8 & 12.0 & \cellcolor{cavg}38.7 \\
GT-Reward & 78.8 & 53.8 & 25.3 & 60.4 & 20.7 & 50.2 & 12.1 & \cellcolor{cavg}43.0 \\
\cmidrule(lr){1-9}
TTRL & 77.9 & 50.2 & 26.5 & \textbf{59.2} & \textbf{24.8} & \underline{51.2} & 12.0 & \cellcolor{cavg}43.1 \\
RENT & 75.4 & 45.2 & 12.0 & \textbf{59.2} & 17.7 & 49.4 & 11.5 & \cellcolor{cavg}38.6 \\
Intuitor & 75.8 & 40.8 & 21.7 & 54.3 & 21.7 & \textbf{51.4} & 12.0 & \cellcolor{cavg}39.7 \\
Co-rewarding-II & 75.4 & 53.4 & 24.1 & 54.9 & \underline{23.7} & 49.2 & \underline{12.1} & \cellcolor{cavg}41.8 \\
\method (Same family) & \underline{78.4} & 52.4 & 26.5 & \underline{57.9} & 21.7 & 49.6 & \textbf{12.4} & \cellcolor{cavg}42.7 \\
\method (Different family) & \textbf{80.5} & \textbf{56.2} & \underline{27.7} & \textbf{59.2} & 21.2 & 50.4 & 11.0 & \cellcolor{cavg}\underline{43.7} \\
\method (Different family+) & \underline{78.4} & \underline{55.2} & \textbf{30.1} & \textbf{59.2} & 22.2 & 50.4 & 12.0 & \cellcolor{cavg}\textbf{43.9} \\
\bottomrule
\end{tabular}}
\caption{Full performance across seven benchmarks for 3B models (\%). For
each benchmark, the best label-free result is shown in \textbf{bold} and the
second best is \underline{underlined}, with ties sharing the marking. Base
and GT-Reward serve as references and are excluded from the ranking.
\method (Same family) trains two agents initialized from the same base
model. \method (Different family) pairs one agent from each of the two
families. \method (Different family+) further decouples the training data.
Appendix~\ref{app:llm78b} extends the comparison to 7B and 8B models.}
\label{tab:llm3b}
\end{table*}

\textbf{\method outperforms Multi-Agent RL.}
We next compare \method with existing multi-agent RL methods. While these approaches also train multiple models jointly, existing state-of-the-art methods typically rely on an additional judging mechanism to construct rewards, such as an LLM judge in CoMAS or a learned reward model in MAPoRL. Following CoMAS, we adopt the same experimental setup, official implementation, and evaluation benchmarks to ensure a fair comparison. We report the CoMAS results as presented in the original paper. As shown in Table~\ref{tab:comas}, \method achieves the best average
performance and leads on five of the seven benchmarks, outperforming CoMAS by
4.0\% while using only half as many agents and requiring no
additional judging mechanism.

\textbf{Scaling \method to Three Agents.} We further extend \method beyond the two-agent setting by jointly training Qwen2.5-3B, Llama-3.2-3B-Instruct, and Qwen3-1.7B in a single run. For each agent, we compare \method against the same model trained independently using ground-truth rewards (GT-Reward) or self-generated majority-vote rewards (TTRL). Results are reported in Table~\ref{tab:abl_n3}. \method consistently improves all three base models, with average gains of 7.8\%, 6.0\%, and 8.2\%, respectively. Despite using no ground-truth supervision, three-agent \method matches or outperforms GT-Reward in average performance for all three models, while also outperforming TTRL for Qwen2.5-3B and Llama-3.2-3B-Instruct. These results suggest that \method naturally extends beyond pairwise training, allowing multiple heterogeneous agents to benefit from cross-agent supervision within a shared training run.


\begin{table}[t]\centering\small
\setlength{\tabcolsep}{5pt}\renewcommand{\arraystretch}{1.12}
\resizebox{\linewidth}{!}{%
\begin{tabular}{l cccccccc}
\toprule
Method & GSM8K & MATH-500 & HumanEval & MBPP & MMLU & GPQA & SciBench & \cellcolor{cavg}\textbf{Avg} \\
\midrule
Base & 85.40 & 55.00 & 73.78 & 55.80 & 63.20 & 28.79 & 36.47 & \cellcolor{cavg}56.92 \\
MAPoRL & 85.80 & 55.40 & 75.61 & 57.00 & 63.20 & \textbf{31.47} & \textbf{39.08} & \cellcolor{cavg}58.22 \\
TTRL & \underline{88.20} & \underline{56.80} & 73.78 & 59.00 & 63.80 & 27.23 & \underline{38.48} & \cellcolor{cavg}58.18 \\
CoMAS & 87.20 & 55.80 & \underline{77.44} & \underline{59.20} & \underline{65.60} & \underline{29.69} & 37.68 & \cellcolor{cavg}\underline{58.94} \\
\method (Different family) & \textbf{89.5} & \textbf{68.6} & \textbf{82.32} & \textbf{68.00} & \textbf{65.80} & \underline{29.69} & 36.87 & \cellcolor{cavg}\textbf{62.97} \\
\bottomrule
\end{tabular}}
\caption{Comparison under the CoMAS multi-agent RL setting (\%). All methods train Qwen2.5-3B-Instruct on the same prompt mixture and are
evaluated following the CoMAS protocol. Results for prior methods are reported
from \citet{comas2025}.}
\label{tab:comas}
\end{table}

\textbf{\method Transfers to Vision-Language Models.} Vision-language model families differ in both their visual encoders and language backbones, providing a stronger test of \method under heterogeneous architectures. We therefore extend \method to multimodal mathematical reasoning. Table~\ref{tab:mllm_small} reports results for Qwen2.5-VL-3B and InternVL3.5-2B, while Appendix~\ref{app:mllm_large} extends the evaluation to three model families from 7B to 12B. \method achieves the best average performance in three of the four 2B-3B settings and remains competitive with ground-truth supervision. The gains persist at larger scales, where \method consistently outperforms TTRL and even surpasses GT-Reward for Gemma-3-12B, demonstrating that \method generalizes beyond text-only models.

\begin{table*}[t]\centering
\setlength{\tabcolsep}{4.0pt}\renewcommand{\arraystretch}{1.12}\small
\resizebox{\textwidth}{!}{%
\begin{tabular}{ll cccccccc}
\toprule
Model & Method & GSM8K & MATH500 & AMC & HEval & GPQA & MBPP & LCB & \cellcolor{cavg}\textbf{Avg} \\
\midrule
\multirow{4}{*}{Qwen2.5-3B}
 & Base & 73.4 & 56.6 & 28.9 & 39.0 & 21.2 & 52.2 & 13.7 & \cellcolor{cavg}40.7 \\
 & GT-Reward & 76.2 & 64.6 & 36.1 & 65.2 & 20.7 & 54.4 & 14.5 & \cellcolor{cavg}47.4 \\
 & TTRL & \textbf{80.4} & \textbf{66.4} & \underline{31.3} & \underline{63.4} & \underline{22.2} & \underline{51.8} & \textbf{15.9} & \cellcolor{cavg}\underline{47.3} \\
 & \method (Different family) & \underline{79.8} & \underline{66.3} & \textbf{33.6} & \textbf{64.6} & \textbf{23.2} & \textbf{56.0} & \underline{15.8} & \cellcolor{cavg}\textbf{48.5} \\
\midrule
\multirow{4}{*}{Llama-3.2-3B-Instruct}
 & Base & 73.6 & 43.8 & 18.1 & 51.2 & 21.2 & 50.8 & 12.0 & \cellcolor{cavg}38.7 \\
 & GT-Reward & 78.8 & 53.8 & 25.3 & 60.4 & 20.7 & 50.2 & 12.1 & \cellcolor{cavg}43.0 \\
 & TTRL & \textbf{77.9} & \underline{50.2} & \underline{26.5} & \underline{59.2} & \underline{24.8} & \textbf{51.2} & \textbf{12.0} & \cellcolor{cavg}\underline{43.1} \\
 & \method (Different family) & \underline{77.8} & \textbf{54.2} & \textbf{28.8} & \textbf{64.4} & \textbf{25.1} & \underline{50.9} & \underline{11.7} & \cellcolor{cavg}\textbf{44.7} \\
\midrule
\multirow{4}{*}{Qwen3-1.7B}
 & Base & 67.0 & 60.9 & 27.5 & 40.0 & 15.3 & 50.6 & 12.4 & \cellcolor{cavg}39.1 \\
 & GT-Reward & 67.1 & 67.0 & 34.3 & 70.1 & 25.2 & 51.2 & 15.2 & \cellcolor{cavg}47.2 \\
 & TTRL & \textbf{70.3} & \textbf{67.6} & \underline{32.1} & \textbf{69.5} & \underline{24.8} & \underline{52.0} & \underline{15.1} & \cellcolor{cavg}\textbf{47.3} \\
 & \method (Different family) & \underline{69.3} & \textbf{67.6} & \textbf{32.7} & \underline{64.2} & \textbf{27.1} & \textbf{54.6} & \textbf{15.3} & \cellcolor{cavg}\textbf{47.3} \\
\bottomrule
\end{tabular}}
\caption{Three-agent \method with heterogeneous model families (\%).
Qwen2.5-3B, Llama-3.2-3B-Instruct, and Qwen3-1.7B are jointly trained in a single \method run. For each model, we compare against the base model, training with ground-truth rewards (GT-Reward), and self-rewarding with majority-vote pseudo-labels (TTRL).}
\label{tab:abl_n3}
\end{table*}

\begin{table*}[t]\centering
\setlength{\tabcolsep}{5pt}\renewcommand{\arraystretch}{1.12}\small
\resizebox{\textwidth}{!}{%
\begin{tabular}{lll ccccc}
\toprule
Backbone & Data & Method & MathVision & MathVerse & MathVista & We-Math & \cellcolor{cavg}\textbf{Avg} \\
\midrule
\multirow{8}{*}{\textbf{InternVL-3.5-2B}}
 & \multirow{4}{*}{open-r1}
   & GT-Reward & 26.55 & 35.33 & 59.60 & 59.31 & \cellcolor{cavg}45.20 \\
 & & Base & 24.77 & 34.21 & 55.60 & 57.87 & \cellcolor{cavg}43.11 \\
 & & TTRL & \underline{25.86} & \underline{34.24} & \underline{57.60} & \textbf{62.47} & \cellcolor{cavg}\underline{45.04} \\
 & & \method (Different family) & \textbf{26.25} & \textbf{34.92} & \textbf{58.90} & \underline{61.55} & \cellcolor{cavg}\textbf{45.40} \\
\cmidrule(lr){2-8}
 & \multirow{4}{*}{MMR1}
   & GT-Reward & 25.99 & 34.37 & 59.00 & 59.25 & \cellcolor{cavg}44.65 \\
 & & Base & 24.77 & 34.21 & 55.60 & 57.87 & \cellcolor{cavg}43.11 \\
 & & TTRL & \textbf{26.38} & \textbf{35.36} & \underline{57.70} & \textbf{61.78} & \cellcolor{cavg}\textbf{45.30} \\
 & & \method (Different family) & \underline{26.05} & \underline{34.80} & \textbf{58.60} & \underline{61.15} & \cellcolor{cavg}\underline{45.15} \\
\midrule
\multirow{8}{*}{\textbf{Qwen2.5-VL-3B}}
 & \multirow{4}{*}{open-r1}
   & GT-Reward & 21.71 & 31.29 & 60.90 & 57.99 & \cellcolor{cavg}42.97 \\
 & & Base & 18.55 & 26.04 & 52.70 & 51.67 & \cellcolor{cavg}37.24 \\
 & & TTRL & \underline{21.15} & \underline{30.05} & \underline{57.40} & \underline{61.55} & \cellcolor{cavg}\underline{42.54} \\
 & & \method (Different family) & \textbf{21.94} & \textbf{30.48} & \textbf{60.20} & \textbf{62.93} & \cellcolor{cavg}\textbf{43.89} \\
\cmidrule(lr){2-8}
 & \multirow{4}{*}{MMR1}
   & GT-Reward & 19.57 & 27.34 & 59.40 & 57.82 & \cellcolor{cavg}41.03 \\
 & & Base & 18.55 & 26.04 & 52.70 & 51.67 & \cellcolor{cavg}37.24 \\
 & & TTRL & \underline{17.99} & \underline{24.72} & \underline{56.30} & \underline{52.87} & \cellcolor{cavg}\underline{37.97} \\
 & & \method (Different family) & \textbf{21.05} & \textbf{28.91} & \textbf{57.20} & \textbf{57.30} & \cellcolor{cavg}\textbf{41.12} \\
\bottomrule
\end{tabular}}
\caption{Vision-language results for the small pair, Qwen2.5-VL-3B with
InternVL3.5-2B, trained separately on open-r1 and MMR1 (\%). Base is graded
once with the corrected multiple-choice grader and is therefore identical
across the two training sets. Base and GT-Reward serve as references and are
excluded from the ranking.}
\label{tab:mllm_small}
\end{table*}

\begin{figure}[h]\centering
  \includegraphics[width=0.8\textwidth]{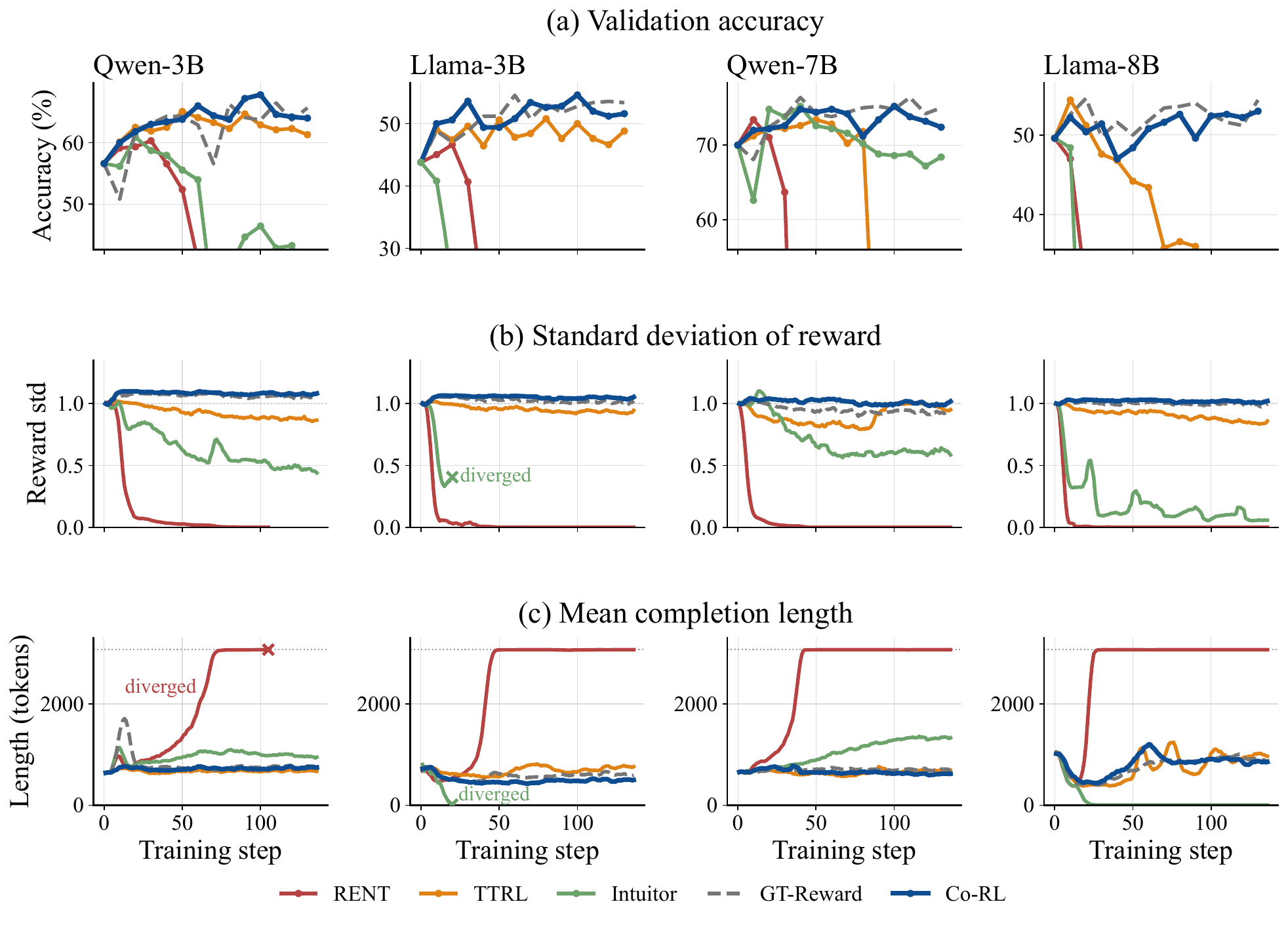}
  \caption{Training dynamics at four scales, one column per backbone
  (Qwen2.5-3B, Llama-3.2-3B, Qwen2.5-7B, Llama-3.1-8B). (a) MATH-500
  validation accuracy, (b) standard deviation of the reward within a
  rollout group, normalized to its value at the first step, and (c) mean
  completion length. Runs marked diverged leave the plotted range.}
  \label{fig:dynamics}
\end{figure}

\section{Ablation Study}
\paragraph{Training dynamics and stability.}
We further examine the training dynamics of \method in Appendix~\ref{ap:training_dynamic}. As shown in Figure~\ref{fig:dynamics}, across text models, \method maintains stable reward variation and completion lengths, whereas self-rewarding baselines can exhibit reward collapse, length degeneration, or divergence. We observe a similar pattern for VLMs: the agents retain partial agreement while the accuracy of their exchanged pseudo-labels improves throughout training. These results are consistent with our theory: by decoupling an agent's update from its own predictions, cross-agent supervision avoids self-reinforcing errors and preserves an informative learning signal throughout training.

\paragraph{Controlling for training and inference budgets.} To match the two-agent training budget of \method, we construct a self-rewarding baseline with the same two base models. Each model is trained independently with TTRL. At inference, both \method and TTRL ensemble the two models by pooling four rollouts from each for majority voting, thereby matching both training and test-time budgets. As shown in Appendix~\ref{sec:ensemble_control}, \method consistently achieves the best average score across text and multimodal settings, showing that the gains come from cross-agent supervision rather than additional compute or ensembling alone.

\section{Conclusion}
\label{sec:conclusion}

In this work, we introduced \method, a label-free multi-agent RL framework for reasoning tasks. In our framework, multiple agents learn from rewards constructed from their peers' predictions rather than ground-truth labels or external judges. Across text-only and multimodal reasoning benchmarks, \method consistently improves diverse LLMs and VLMs, outperforming prior self-rewarding and multi-agent RL approaches and, in many settings, matching or surpassing training with ground-truth rewards. Our theoretical analysis shows that cross-agent supervision expands the set of initial conditions that converge to the correct solution, allowing \method to correct errors that self-rewarding RL would otherwise reinforce. An important direction for future work is to understand how the number, diversity, and interaction topology of agents shape cross-agent learning, and to develop adaptive supervision mechanisms that more effectively exploit complementary expertise.

\bibliographystyle{plainnat}
\bibliography{references}

\appendix 
\section{Pre-training Error Decoupling}
\label{app:decoupling}

Section~\ref{sec:diversity} argues that a peer helps when its errors
do not overlap with the agent's own and
Figure~\ref{fig:method_panels}(d) previews this overlap for three kinds of
model pairs. This appendix reports the full measurement. All numbers are
computed on base checkpoints before any RL, so they describe the
pretrained models themselves rather than anything our training produces.

\paragraph{Setup and metrics.} We score base checkpoints on 500 MATH
problems at levels 3 to 5, zero-shot and single-sample at $T{=}0.8$, with
rule-based extraction and equivalence checking. For a pair of models, every
problem lands in one of the four cells of Table~\ref{tab:contingency}, and
the four diversity measures \citep{kuncheva2003diversity} are counts over
these cells. Complementarity $c$ is the share of problems where exactly one
model is correct, so one model can correct the other. Oracle accuracy $u$
is the share of problems outside the both-wrong cell, which is the accuracy
a perfect selector would reach \citep{krogh1995ensemble}. Wrong-agreement
$w$ is the part of the both-wrong cell where the two models also return the
same answer, which is the case a majority vote cannot detect. Cohen's
$\kappa$ \citep{cohen1960kappa} measures how strongly the two models land
in the same cells beyond chance, so lower values mean more decoupled
errors. All four numbers come from the same four cells. $\kappa$ and $c$
are therefore two readings of one measurement rather than independent
evidence, and across the twelve pairs below they correlate at $r=-0.98$.

\begin{table}[h]\centering\small
\setlength{\tabcolsep}{10pt}
\begin{tabular}{l cc}
\toprule
 & B correct & B wrong \\
\midrule
A correct & both correct & only A correct \\
A wrong   & only B correct & both wrong \\
\bottomrule
\end{tabular}
\caption{The four outcomes for a pair of models A and B. Every problem
falls into exactly one cell, and all four diversity measures are counts
over these cells.}
\label{tab:contingency}
\end{table}

\paragraph{Three levels of decoupling.} Table~\ref{tab:dec_pool} groups pairs by
what the two models differ in. \emph{Seed only} pairs a checkpoint with itself
under a different sampling seed, so the two views share every weight and differ
in generation noise alone. \emph{Same family} pairs models of one lineage across
sizes or generations. \emph{Different family} pairs models with separate
architectures and pretraining data. The first level is what \method has by
construction, and the third is what a different-family cohort adds.

\paragraph{The three levels separate without overlap.} Every different-family
pair reaches $\kappa\le0.42$ and $c\ge29.4$, and every same-family and seed-only
pair sits at $\kappa\ge0.51$ and $c\le24.6$. No pair falls between the two
groups, at either scale. Averaged within a group, crossing families lowers
$\kappa$ from $0.53$ to $0.38$ and raises $c$ from $23.2$ to $30.8$. The
same-family and seed-only groups are not distinguishable from one another, so
changing the size, the generation or the sampling seed within a lineage leaves
the error structure where it was.

\paragraph{Holding the anchor fixed gives the same ordering.} Group averages mix
models of different strength, so we also fix one model and vary only its partner
(Table~\ref{tab:dec_anchor}). Both anchors give a monotone ladder. Relative to
pairing a model with itself, a same-family partner lowers $\kappa$ by $0.04$ and
$0.07$, and a different-family partner lowers it by $0.18$ and $0.16$ while
raising $c$ by 9.2\% and 10.4\%. Because the seed-only row is the anchor
paired with itself, capability is identical along each ladder and the source of
the partner is the only variable. Wrong-agreement follows the same direction,
and is lowest for the different-family partner under both anchors, though it
does not order the middle rows.

\paragraph{What this means for \method.} Two models trained by different groups
on different data fail on different problems, and no amount of resampling or
rescaling within one lineage reproduces that. A same-family cohort still gives
each agent a target it did not produce itself, which is enough for stable
training, but the target repeats what the agent would have answered anyway on
three quarters of the problems. A different-family cohort is the cheapest way to
buy the remaining headroom.
\begin{table}[t]\centering\small
\setlength{\tabcolsep}{7pt}
\begin{tabular}{llcccc}
\toprule
Decoupling & Pair & $\kappa\ \downarrow$ & $c\ \uparrow$ (\%) & $w\ \downarrow$ (\%) & $u\ \uparrow$ (\%) \\
\midrule
\multicolumn{6}{l}{\emph{3B tier}}\\
different family & Llama-3.2-3B $\times$ Phi-3.5-mini  & \textbf{0.31} & \textbf{32.8} & 3.0 & 53.0 \\
different family & Qwen2.5-3B $\times$ Llama-3.2-3B    & 0.38 & 31.2 & 2.4 & 63.0 \\
different family & Qwen2.5-3B $\times$ Phi-3.5-mini    & 0.38 & 31.2 & 4.0 & 55.4 \\
different family & Qwen2.5-3B $\times$ MiniCPM3-4B     & 0.41 & 29.4 & 4.4 & 60.4 \\
same family      & Qwen2.5-3B $\times$ Qwen3-1.7B-Base & 0.52 & 24.2 & 4.2 & 63.2 \\
seed only        & Qwen3-1.7B-Base $\times$ itself     & 0.52 & 24.0 & 5.0 & 66.4 \\
seed only        & Qwen2.5-3B $\times$ itself          & 0.56 & 22.0 & 4.4 & 62.6 \\
\midrule
\multicolumn{6}{l}{\emph{7B tier}}\\
different family & Qwen2.5-7B $\times$ Llama-3.1-8B    & \textbf{0.42} & \textbf{29.4} & 1.8 & 71.4 \\
same family      & Qwen2.5-7B $\times$ Qwen2.5-3B      & 0.51 & 24.2 & 3.8 & 69.8 \\
same family      & Qwen2.5-7B $\times$ Qwen3-1.7B-Base & 0.51 & 24.4 & 4.0 & 70.4 \\
seed only        & Llama-3.1-8B $\times$ itself        & 0.51 & 24.6 & 3.0 & 62.0 \\
seed only        & Qwen2.5-7B $\times$ itself          & 0.58 & 19.0 & 5.2 & 74.6 \\
\bottomrule
\end{tabular}
\caption{Error decoupling before RL, by what the two models differ in, sorted by
$\kappa$ within each block.}
\label{tab:dec_pool}
\end{table}

\begin{table}[t]\centering\small
\setlength{\tabcolsep}{9pt}
\begin{tabular}{llccc}
\toprule
Partner & Decoupling & $\kappa\ \downarrow$ & $c\ \uparrow$ (\%) & $w\ \downarrow$ (\%) \\
\midrule
\multicolumn{5}{l}{\emph{Anchor: Qwen2.5-3B}}\\
itself, new seed & seed only        & 0.56 & 22.0 & 4.4 \\
Qwen3-1.7B-Base  & same family      & 0.52 & 24.2 & 4.2 \\
MiniCPM3-4B      & different family & 0.41 & 29.4 & 4.4 \\
Phi-3.5-mini     & different family & 0.38 & 31.2 & 4.0 \\
Llama-3.2-3B     & different family & \textbf{0.38} & \textbf{31.2} & \textbf{2.4} \\
\midrule
\multicolumn{5}{l}{\emph{Anchor: Qwen2.5-7B}}\\
itself, new seed & seed only        & 0.58 & 19.0 & 5.2 \\
Qwen2.5-3B       & same family      & 0.51 & 24.2 & 3.8 \\
Qwen3-1.7B-Base  & same family      & 0.51 & 24.4 & 4.0 \\
Llama-3.1-8B     & different family & \textbf{0.42} & \textbf{29.4} & \textbf{1.8} \\
\bottomrule
\end{tabular}
\caption{One model held fixed, partner varied. Capability is identical to the
seed-only row along each ladder, so the source of the partner is the only
variable.}
\label{tab:dec_anchor}
\end{table}

\section{Complete Proof}
\subsection{Proof of Proposition~\ref{prop:comparison_dynamics}}
\label{ap:prop1_comparison}

In this section, we derive the reward-induced GRPO dynamics in
Proposition~\ref{prop:comparison_dynamics}. We first derive a common update
expression under a fixed pseudo-label and then specialize it to self-rewarding
and cross-agent supervision.

For a fixed prompt $x$, let
$X^k=\mathbf{1}[a^k=a^\star]$ indicate whether the $k$-th rollout is correct.
Under the binary reduction,
$X^k\overset{\mathrm{i.i.d.}}{\sim}\operatorname{Bernoulli}(p)$, and
$$
C=\sum_{k=1}^{K}X^k\sim\operatorname{Bin}(K,p)
$$
denotes the number of correct responses among the $K$ rollouts.
To analyze the induced binary dynamics, we use the log-odds
$$
\ell=\log\frac{p}{1-p}
$$
as a one-dimensional coordinate. Equivalently,
$p=\sigma(\ell)=1/(1+e^{-\ell})$.
We analyze the reward-induced component of GRPO in the infinitesimal-update
limit, where clipping is locally inactive.

\textbf{GRPO update under a fixed pseudo-label.}
Let $Z\in\{0,1\}$ indicate whether a fixed pseudo-label is correct, where
$Z=1$ corresponds to $a^\star$ and $Z=0$ to the aggregate incorrect answer.
The binary reward for rollout $k$ is
$$
r^k
=
\mathbf{1}[X^k=Z]
=
(1-Z)+(2Z-1)X^k.
$$
Since $C=\sum_{k=1}^{K}X^k$, the group-mean reward is
$$
\bar r
=
(1-Z)+(2Z-1)\frac{C}{K},
$$
and therefore
\begin{equation}
r^k-\bar r
=
(2Z-1)
\left(
X^k-\frac{C}{K}
\right).
\label{eq:centered_binary_reward}
\end{equation}

The within-group standard deviation of the binary rewards is
$$
s_K(C)
\triangleq
\sqrt{
\frac{1}{K}
\sum_{j=1}^{K}(r^j-\bar r)^2
}
=
\frac{\sqrt{C(K-C)}}{K}.
$$
For $0<C<K$, the normalized group-relative advantage is therefore
$$
\widehat A^k
=
\frac{r^k-\bar r}{s_K(C)}
=
\frac{2Z-1}{s_K(C)}
\left(
X^k-\frac{C}{K}
\right).
$$
For $C\in\{0,K\}$, all rewards are identical and the centered update is zero.

Treating the advantages as fixed during the policy update, the reward-induced
GRPO gradient in the log-odds coordinate is
$$
\begin{aligned}
g_\ell(C,Z)
&\triangleq
\frac{\partial}{\partial\ell}
\left[
\frac{1}{K}
\sum_{k=1}^{K}
\widehat A^k
\log\Pr_\ell(X^k)
\right] \\
&=
\frac{1}{K}
\sum_{k=1}^{K}
\widehat A^k
\frac{\partial}{\partial\ell}
\left[
X^k\log p+(1-X^k)\log(1-p)
\right] \\
&=
\frac{1}{K}
\sum_{k=1}^{K}
\widehat A^k(X^k-p) \\
&=
\frac{2Z-1}{Ks_K(C)}
\sum_{k=1}^{K}
\left(
X^k-\frac{C}{K}
\right)(X^k-p).
\end{aligned}
$$
Since
$\sum_{k=1}^{K}(X^k-C/K)=0$, the terms involving $p$ cancel. Using
$(X^k)^2=X^k$ and $\sum_{k=1}^{K}X^k=C$ gives
$$
\sum_{k=1}^{K}
\left(
X^k-\frac{C}{K}
\right)X^k
=
C-\frac{C^2}{K}
=
\frac{C(K-C)}{K}.
$$
Hence,
\begin{equation}
g_\ell(C,Z)
=
(2Z-1)
\frac{\sqrt{C(K-C)}}{K}.
\label{eq:fixed_label_update}
\end{equation}

\textbf{[1] Self-rewarding.}
Under majority-vote self-rewarding, the pseudo-label is determined by the
same rollout group:
$$
Z_{\mathrm{self}}
=
\mathbf{1}\!\left[C>\frac{K}{2}\right].
$$
Since $K$ is odd,
$$
2Z_{\mathrm{self}}-1
=
\operatorname{sign}\!\left(C-\frac{K}{2}\right).
$$
Substituting into Eq.~\eqref{eq:fixed_label_update} gives
$$
g_\ell(C,Z_{\mathrm{self}})
=
\operatorname{sign}\!\left(C-\frac{K}{2}\right)
\frac{\sqrt{C(K-C)}}{K}.
$$
For a GRPO update with learning rate $\eta$,
$$
\ell^+-\ell
=
\eta\,g_\ell(C,Z_{\mathrm{self}}).
$$
Since $C\sim\operatorname{Bin}(K,p)$ is random, the conditional expected
one-step update is
$$
\mathbb{E}[\ell^+-\ell\mid p]
=
\eta\,
\mathbb{E}_{C\sim\operatorname{Bin}(K,p)}
\left[
\operatorname{sign}\!\left(C-\frac{K}{2}\right)
\frac{\sqrt{C(K-C)}}{K}
\right].
$$
In the infinitesimal-update limit, the corresponding mean log-odds dynamics
are
$$
\dot\ell
=
\eta\,
\mathbb{E}_{C\sim\operatorname{Bin}(K,p)}
\left[
\operatorname{sign}\!\left(C-\frac{K}{2}\right)
\frac{\sqrt{C(K-C)}}{K}
\right].
$$
Since $p=\sigma(\ell)$,
$$
\begin{aligned}
\dot p
&=
\frac{\partial p}{\partial\ell}\dot\ell \\
&=
\eta\,p(1-p)
\mathbb{E}_{C\sim\operatorname{Bin}(K,p)}
\left[
\operatorname{sign}\!\left(C-\frac{K}{2}\right)
\frac{\sqrt{C(K-C)}}{K}
\right],
\end{aligned}
$$
which recovers Eq.~\eqref{eq:self_dynamics_main}.

\textbf{[2] \method: Cross-agent supervision.}
For agent $n$, let
$X_n^k=\mathbf{1}[a_n^k=a^\star]$ and
$$
C_n
=
\sum_{k=1}^{K}X_n^k
\sim\operatorname{Bin}(K,p_n).
$$
Let
$$
Z_{-n}
=
\mathbf{1}[\hat a_{-n}(x)=a^\star]
$$
indicate whether the pseudo-label constructed from the other agents is
correct. Conditional on the prompt and current policies, $Z_{-n}$ is
independent of agent $n$'s rollout group, and hence
$$
Z_{-n}\perp C_n.
$$

Applying Eq.~\eqref{eq:fixed_label_update} to agent $n$ gives
$$
g_{\ell_n}(C_n,Z_{-n})
=
\operatorname{sign}\!\left(Z_{-n}-\frac12\right)
\frac{\sqrt{C_n(K-C_n)}}{K},
$$
where
$\ell_n=\log\frac{p_n}{1-p_n}$.
Therefore,
$$
\begin{aligned}
\mathbb{E}[\ell_n^+-\ell_n\mid p_n,p_{-n}]
&=
\eta_n\,
\mathbb{E}
\left[
\operatorname{sign}\!\left(Z_{-n}-\frac12\right)
\frac{\sqrt{C_n(K-C_n)}}{K}
\right] \\
&=
\eta_n\,
\mathbb{E}
\left[
\operatorname{sign}\!\left(Z_{-n}-\frac12\right)
\right]
\mathbb{E}_{C_n\sim\operatorname{Bin}(K,p_n)}
\left[
\frac{\sqrt{C_n(K-C_n)}}{K}
\right],
\end{aligned}
$$
where the second equality follows from
$Z_{-n}\perp C_n$.

Taking the infinitesimal-update limit and using
$\partial p_n/\partial\ell_n=p_n(1-p_n)$ yields
$$
\begin{aligned}
\dot p_n
&=
\frac{\partial p_n}{\partial\ell_n}\dot\ell_n \\
&=
\eta_n p_n(1-p_n)
\mathbb{E}
\left[
\operatorname{sign}\!\left(Z_{-n}-\frac12\right)
\right]
\mathbb{E}_{C_n\sim\operatorname{Bin}(K,p_n)}
\left[
\frac{\sqrt{C_n(K-C_n)}}{K}
\right],
\end{aligned}
$$
which recovers Eq.~\eqref{eq:corl_dynamics_general_main}.

\textbf{Symmetric two-agent dynamics.}
We finally specialize the result to two symmetric agents $A$ and $B$.
For agent $A$, its pseudo-label is the majority vote of agent $B$'s
rollouts. Hence, if
$C_B\sim\operatorname{Bin}(K,p_B)$,
$$
Z_{-A}
=
\mathbf{1}\!\left[C_B>\frac{K}{2}\right],
$$
and therefore
$$
\begin{aligned}
\mathbb{E}
\left[
\operatorname{sign}\!\left(Z_{-A}-\frac12\right)
\right]
&=
\mathbb{E}_{C_B\sim\operatorname{Bin}(K,p_B)}
\left[
\operatorname{sign}\!\left(C_B-\frac{K}{2}\right)
\right] \\
&=
2V_K(p_B)-1
=
\phi_K(p_B).
\end{aligned}
$$
Similarly,
$$
\mathbb{E}
\left[
\operatorname{sign}\!\left(Z_{-B}-\frac12\right)
\right]
=
\phi_K(p_A).
$$

Assuming the same learning rate $\eta$ for both agents and defining
$$
q_K(p)
=
\eta p(1-p)
\mathbb{E}_{C\sim\operatorname{Bin}(K,p)}
\left[
\frac{\sqrt{C(K-C)}}{K}
\right]
>0,
$$
we obtain
$$
\dot p_A
=
q_K(p_A)\phi_K(p_B),
\qquad
\dot p_B
=
q_K(p_B)\phi_K(p_A),
$$
which recovers Eq.~\eqref{eq:corl_two_agent_main}.

\subsection{Proof of Proposition~\ref{prop:self_confirming}}
\label{ap:prop_self}

We prove that majority-vote self-rewarding reinforces the currently favored
answer and therefore induces self-confirming dynamics. For convenience, denote
$$
G_K^{\mathrm{self}}(p)
=
\mathbb{E}_{C\sim\operatorname{Bin}(K,p)}
\left[
\operatorname{sign}\!\left(C-\frac{K}{2}\right)
\frac{\sqrt{C(K-C)}}{K}
\right].
$$
Expanding the expectation gives
$$
G_K^{\mathrm{self}}(p)
=
\sum_{c=0}^{K}
\operatorname{sign}\!\left(c-\frac{K}{2}\right)
\frac{\sqrt{c(K-c)}}{K}
\Pr_p(C=c).
$$
Since the update magnitude satisfies
$$
\frac{\sqrt{c(K-c)}}{K}
=
\frac{\sqrt{(K-c)c}}{K},
$$
each event $C=c>K/2$ can be paired with its symmetric event
$C=K-c<K/2$. Moreover, the update magnitude vanishes at $c=0$ and $c=K$.
Hence,
\begin{equation}
G_K^{\mathrm{self}}(p)
=
\sum_{c=(K+1)/2}^{K-1}
\frac{\sqrt{c(K-c)}}{K}
\left[
\Pr_p(C=c)-\Pr_p(C=K-c)
\right].
\label{eq:self_pairing}
\end{equation}

For $p\in(0,1)$ and $c>K/2$,
$$
\begin{aligned}
\frac{\Pr_p(C=c)}
     {\Pr_p(C=K-c)}
&=
\frac{
\binom{K}{c}p^c(1-p)^{K-c}
}{
\binom{K}{K-c}p^{K-c}(1-p)^c
} \\
&=
\left(\frac{p}{1-p}\right)^{2c-K},
\end{aligned}
$$
where we use
$\binom{K}{c}=\binom{K}{K-c}$.
Since $2c-K>0$,
$$
\Pr_p(C=c)-\Pr_p(C=K-c)
\begin{cases}
>0, & p>\frac12,\\
=0, & p=\frac12,\\
<0, & p<\frac12.
\end{cases}
$$
The factor $\sqrt{c(K-c)}/K$ in
Eq.~\eqref{eq:self_pairing} is strictly positive for $0<c<K$.
Therefore, every term in the sum has the same sign, yielding
\begin{equation}
\operatorname{sign}\!\left(G_K^{\mathrm{self}}(p)\right)
=
\operatorname{sign}\!\left(p-\frac12\right).
\label{eq:self_sign}
\end{equation}

We next characterize the limiting behavior. From
Eq.~\eqref{eq:self_dynamics_main},
$$
\dot p
=
\eta p(1-p)G_K^{\mathrm{self}}(p).
$$
If $0<p(0)<1/2$, Eq.~\eqref{eq:self_sign} implies
$\dot p<0$ whenever $p\in(0,1/2)$. Thus, $p(t)$ is monotonically
decreasing and bounded below by zero, and therefore converges to some
$p_\infty\in[0,1/2)$.

Suppose $p_\infty>0$. By Eq.~\eqref{eq:self_sign},
$$
p_\infty(1-p_\infty)
G_K^{\mathrm{self}}(p_\infty)<0.
$$
Since the right-hand side of Eq.~\eqref{eq:self_dynamics_main} is continuous,
$\dot p$ remains strictly negative in a neighborhood of $p_\infty$, which
contradicts convergence to an interior limit. Hence,
$$
p(0)<\frac12
\quad\Longrightarrow\quad
p(t)\rightarrow0.
$$

Similarly, if $1/2<p(0)<1$, then $\dot p>0$. Thus, $p(t)$ is monotonically
increasing and bounded above by one. The same argument rules out any interior
limit, giving
$$
p(0)>\frac12
\quad\Longrightarrow\quad
p(t)\rightarrow1.
$$
Finally, at $p=1/2$, symmetry gives
$G_K^{\mathrm{self}}(1/2)=0$, so $p=1/2$ is an unstable equilibrium.
This completes the proof.

\subsection{Proof of Theorem~\ref{thm:correct_basin}}
\label{ap:theorem1}

We prove Theorem~\ref{thm:correct_basin} for the symmetric two-agent dynamics
from Eq.~\eqref{eq:corl_two_agent_main},
\begin{equation}
\dot p_A
=
q_K(p_A)\phi_K(p_B),
\qquad
\dot p_B
=
q_K(p_B)\phi_K(p_A),
\label{eq:corl_two_agent_appendix}
\end{equation}
where
$$
q_K(p)
=
\eta p(1-p)
\mathbb{E}_{C\sim\operatorname{Bin}(K,p)}
\left[
\frac{\sqrt{C(K-C)}}{K}
\right]
>0
$$
for $p\in(0,1)$, and
$\phi_K(p)=2V_K(p)-1$.
\begin{proof}
\textbf{Symmetry of the dynamics.}
We first establish two useful symmetries.
Since the distribution of $K-C$ under
$C\sim\operatorname{Bin}(K,p)$ is
$\operatorname{Bin}(K,1-p)$ and
$\sqrt{C(K-C)}$ is invariant under $C\mapsto K-C$, we have
$$
q_K(1-p)=q_K(p).
$$
Moreover, for odd $K$, complementing every rollout reverses the majority
outcome, so
$$
V_K(1-p)=1-V_K(p).
$$
Therefore,
\begin{equation}
q_K(1-p)=q_K(p),
\qquad
\phi_K(1-p)=-\phi_K(p).
\label{eq:corl_symmetry}
\end{equation}
Since $V_K(p)$ is strictly increasing and $V_K(1/2)=1/2$,
$$
\operatorname{sign}(\phi_K(p))
=
\operatorname{sign}\!\left(p-\frac12\right).
$$

\textbf{A conserved quantity.}
Define
\begin{equation}
F_K(p)
\triangleq
\int_{1/2}^{p}
\frac{\phi_K(u)}{q_K(u)}\,du.
\label{eq:corl_potential}
\end{equation}
Along any interior trajectory of
Eq.~\eqref{eq:corl_two_agent_appendix},
$$
\begin{aligned}
\frac{d}{dt}
\left[
F_K(p_A)-F_K(p_B)
\right]
&=
\frac{\phi_K(p_A)}{q_K(p_A)}\dot p_A
-
\frac{\phi_K(p_B)}{q_K(p_B)}\dot p_B \\
&=
\phi_K(p_A)\phi_K(p_B)
-
\phi_K(p_B)\phi_K(p_A) \\
&=0.
\end{aligned}
$$
Hence,
\begin{equation}
F_K(p_A)-F_K(p_B)
=
\text{constant}
\label{eq:corl_invariant}
\end{equation}
along every trajectory.

From Eq.~\eqref{eq:corl_symmetry},
$$
F_K(1-p)=F_K(p).
$$
Furthermore, $F_K'(p)=\phi_K(p)/q_K(p)$, so $F_K$ is strictly decreasing
on $(0,1/2)$ and strictly increasing on $(1/2,1)$, with
$F_K(1/2)=0$.

\textbf{Separatrix and basins of attraction.}
Consider first the disagreement region
$p_A<1/2<p_B$. In this region,
$$
\dot p_A>0,
\qquad
\dot p_B<0,
$$
so the two agents move toward one another.

Suppose $p_A+p_B>1$. Then
$p_B>1-p_A>1/2$. Since $F_K$ is strictly increasing on $(1/2,1)$ and
$F_K(1-p_A)=F_K(p_A)$,
$$
F_K(p_B)
>
F_K(1-p_A)
=
F_K(p_A),
$$
and therefore
\begin{equation}
F_K(p_A)-F_K(p_B)<0.
\label{eq:corl_correct_side}
\end{equation}
Because this quantity is conserved, agent $B$ cannot reach $1/2$ before
agent $A$: if $p_B=1/2$, then
$F_K(p_A)-F_K(p_B)=F_K(p_A)\geq0$, contradicting
Eq.~\eqref{eq:corl_correct_side}.
Thus, agent $A$ crosses the decision boundary first.

Once $p_A,p_B>1/2$,
Eq.~\eqref{eq:corl_two_agent_appendix} gives
$$
\dot p_A>0,
\qquad
\dot p_B>0.
$$
Both probabilities therefore increase monotonically and are bounded above by
one. No interior point of $(1/2,1)^2$ is an equilibrium because
$q_K(p)>0$ and $\phi_K(p)>0$ there. Hence,
$$
(p_A(t),p_B(t))
\longrightarrow
(1,1).
$$

Conversely, if $p_A+p_B<1$, then
$p_B<1-p_A$, and the same symmetry gives
$$
F_K(p_A)-F_K(p_B)>0.
$$
The conserved quantity now prevents agent $A$ from reaching $1/2$ before
agent $B$. Thus, $B$ crosses below $1/2$ first, after which both agents
satisfy
$$
\dot p_A<0,
\qquad
\dot p_B<0,
$$
and consequently
$$
(p_A(t),p_B(t))
\longrightarrow
(0,0).
$$
The case $p_B<1/2<p_A$ follows symmetrically.

Now consider $p_A+p_B=1$. Setting $p_B=1-p_A$ and using
Eq.~\eqref{eq:corl_symmetry},
$$
\begin{aligned}
\dot p_A+\dot p_B
&=
q_K(p_A)\phi_K(1-p_A)
+
q_K(1-p_A)\phi_K(p_A)\\
&=0.
\end{aligned}
$$
Hence, the line
\begin{equation}
p_A+p_B=1
\label{eq:corl_separatrix}
\end{equation}
is invariant. Along this line, if $p_A<1/2<p_B$, then
$\dot p_A>0$ and $\dot p_B<0$; the reverse holds when
$p_B<1/2<p_A$. Thus, every interior trajectory on this line converges to
$(1/2,1/2)$.

The cases where both agents initially lie on the same side of $1/2$ follow
directly from Eq.~\eqref{eq:corl_two_agent_appendix}. Combining all cases,
$$
p_A(0)+p_B(0)>1
\quad\Longrightarrow\quad
(p_A(t),p_B(t))\rightarrow(1,1),
$$
whereas
$$
p_A(0)+p_B(0)<1
\quad\Longrightarrow\quad
(p_A(t),p_B(t))\rightarrow(0,0).
$$
Therefore, $p_A+p_B=1$ is the interior separatrix between the two consensus
basins.

\textbf{Stability of the equilibria.}
In a neighborhood of $(1,1)$, both agents satisfy $p_A,p_B>1/2$ and hence
both coordinates increase monotonically toward one. Thus, $(1,1)$ is
asymptotically stable. By symmetry, $(0,0)$ is also asymptotically stable.

Finally, consider $(1/2,1/2)$. Since $\phi_K(1/2)=0$, let
$p_A=1/2+\delta_A$ and $p_B=1/2+\delta_B$. Linearizing
Eq.~\eqref{eq:corl_two_agent_appendix} gives
$$
\begin{bmatrix}
\dot\delta_A\\
\dot\delta_B
\end{bmatrix}
=
q_K(1/2)\phi_K'(1/2)
\begin{bmatrix}
0 & 1\\
1 & 0
\end{bmatrix}
\begin{bmatrix}
\delta_A\\
\delta_B
\end{bmatrix}.
$$
For odd $K$,
$$
\phi_K'(1/2)
=
2K
\binom{K-1}{(K-1)/2}
\left(\frac12\right)^{K-1}
>0.
$$
The Jacobian therefore has eigenvalues
$$
\lambda_{\pm}
=
\pm q_K(1/2)\phi_K'(1/2),
$$
with corresponding eigenvectors $(1,1)$ and $(1,-1)$.
Thus, $(1/2,1/2)$ has one unstable and one stable direction and is therefore
a saddle point. Its stable manifold is exactly the invariant line
$p_A+p_B=1$ established above.

Finally, Proposition~\ref{prop:self_confirming} gives the correct-convergence
basin under independent self-rewarding as
$$
\mathcal B_{\mathrm{self}}^+
=
\left\{
(p_A,p_B):
p_A>\frac12,\;
p_B>\frac12
\right\}.
$$
For \method, the result above gives
$$
\mathcal B_{\mathrm{\method}}^+
=
\left\{
(p_A,p_B):
p_A+p_B>1
\right\}.
$$
Therefore,
$$
\mathcal B_{\mathrm{self}}^+
\subsetneq
\mathcal B_{\mathrm{\method}}^+,
$$
which establishes the strictly larger basin of correct convergence.
\end{proof}

\section{Rephrased Training Questions}
\label{app:rephrase}

The data-decoupled runs train one agent on the original MATH questions and the
other on a rephrased copy produced by DeepSeek-V3. The two copies are aligned by
row and the answer is never changed. The rewrites go beyond word substitution:
most of them place the problem in a concrete scenario, roughly doubling the
question length. In a sample of 300 pairs, every rewrite preserved the answer
and the row alignment. Two representative pairs follow.

\begin{rephrasebox}{Example 1 \hfill \emph{Answer:} 2}
{\scriptsize\textcolor{black!55}{ORIGINAL}}\par
How many vertical asymptotes does the graph of $y=\frac{2}{x^2+x-6}$ have?
\tcblower
{\scriptsize\textcolor{black!55}{REPHRASED}}\par
The function $f(t)=\frac{2}{t^2+t-6}$ describes the temperature of a chemical
reaction over time $t$. How many vertical asymptotes appear on the graph of
this function?
\end{rephrasebox}

\begin{rephrasebox}{Example 2 \hfill \emph{Answer:} $3\sqrt{3}$}
{\scriptsize\textcolor{black!55}{ORIGINAL}}\par
In triangle $ABC$, $AB=AC=14$ and $BC=26$. What is the length of the shortest
angle bisector in $ABC$? Express your answer in simplest radical form.
\tcblower
{\scriptsize\textcolor{black!55}{REPHRASED}}\par
A triangular park has two equal sides of 14 meters and a third side of 26
meters. The city plans a path from each corner that bisects its angle, and will
build only the shortest one. How long is that path? Express your answer in
simplest radical form.
\end{rephrasebox}

\section{Complete Experiment Results}\label{ap:exp}
\subsection{Results on 7B and 8B language models}
\label{app:llm78b}

We extend the comparison in Section~\ref{sec:main} to larger models,
pairing Qwen2.5-7B with Llama-3.1-8B-Instruct. Table~\ref{tab:llm78b}
reports the full results across the same seven benchmarks. The trends observed
for 3B models continue to hold at this scale. All \method variants improve
over their respective base models on average, and using agents from different
model families generally provides stronger gains than using two agents
initialized from the same family. Further decoupling the training data yields
the strongest overall variant, with \method (Different family+) improving
Qwen2.5-7B from 49.0 to 53.6 and Llama-3.1-8B-Instruct from 44.7 to 47.7
on average.

Compared with prior label-free methods, \method (Different family+) achieves
the best average performance for both models, outperforming the strongest
self-rewarding baseline by 0.8\% on Qwen2.5-7B and 1.1\% on
Llama-3.1-8B-Instruct. Notably, on Llama-3.1-8B-Instruct, it also surpasses
GT-Reward (47.7 vs.\ 47.1) despite using no ground-truth labels. These results
show that the benefits of cross-agent supervision persist as model scale
increases, and further support the importance of diversity across agents for
effective label-free learning.

\subsection{Results on 7B--12B Vision-Language Models}
\label{app:mllm_large}

We further evaluate \method on larger vision-language models from three
different families: Qwen2.5-VL-7B, InternVL3.5-8B, and Gemma-3-12B, with
InternVL3.5-8B serving as the shared training partner. Table~\ref{tab:mllm_large}
reports results across the same four multimodal reasoning benchmarks.
The gains observed at smaller scales persist consistently: \method improves
the corresponding base models by 7.2\%, 6.3\%, and 5.8\% on average for
Qwen2.5-VL-7B, InternVL3.5-8B, and Gemma-3-12B, respectively, and outperforms
TTRL for all three model families.

Despite using no ground-truth supervision, \method approaches GT-Reward on
Qwen2.5-VL-7B and InternVL3.5-8B, while surpassing it on Gemma-3-12B
(47.56\% vs.\ 45.17\%). In particular, the improvement remains consistent
across models with substantially different visual encoders and language
backbones. Together with the 2B--3B results in
Table~\ref{tab:mllm_small}, these results show that the benefits of
cross-agent supervision extend across model scales and heterogeneous
vision-language architectures.

\begin{table*}[t]\centering
\setlength{\tabcolsep}{4.0pt}\renewcommand{\arraystretch}{1.12}\small
\resizebox{\textwidth}{!}{%
\begin{tabular}{lcccccccc}
\toprule
Method & GSM8K & MATH500 & AMC & HEval & GPQA & MBPP & LCB & \cellcolor{cavg}\textbf{Avg} \\
\midrule
\multicolumn{9}{c}{\textit{Qwen2.5-7B}}\\
\midrule
Base & 82.9 & 70.0 & 39.8 & 47.6 & 18.7 & 62.8 & 21.1 & \cellcolor{cavg}49.0 \\
GT-Reward & 84.8 & 77.6 & 49.4 & 56.1 & 23.7 & 64.4 & 25.5 & \cellcolor{cavg}54.5 \\
\cmidrule(lr){1-9}
TTRL & 80.6 & 74.8 & 39.8 & 51.8 & 25.8 & \underline{65.4} & 23.9 & \cellcolor{cavg}51.7 \\
RENT & 78.8 & \textbf{75.4} & \textbf{47.0} & 50.6 & \underline{29.8} & 61.6 & 26.2 & \cellcolor{cavg}52.8 \\
Intuitor & \textbf{82.9} & \textbf{75.4} & 41.0 & 51.8 & 28.3 & 64.0 & 24.8 & \cellcolor{cavg}52.6 \\
Co-rewarding-II & \underline{81.9} & 72.6 & 43.4 & \underline{52.4} & 26.8 & 64.0 & 25.9 & \cellcolor{cavg}52.4 \\
\method (Same family) & 78.9 & 74.6 & 41.0 & \underline{52.4} & 25.8 & 61.8 & 25.0 & \cellcolor{cavg}51.4 \\
\method (Different family) & 81.3 & \underline{75.2} & \underline{44.6} & \underline{52.4} & 26.3 & \textbf{65.6} & \underline{26.5} & \cellcolor{cavg}\underline{53.1} \\
\method (Different family+) & 80.2 & 74.4 & 38.6 & \textbf{54.3} & \textbf{37.9} & 63.2 & \textbf{26.6} & \cellcolor{cavg}\textbf{53.6} \\
\midrule
\multicolumn{9}{c}{\textit{Llama-3.1-8B-Instruct}}\\
\midrule
Base & 82.9 & 49.6 & 18.1 & 65.2 & 22.2 & 58.4 & 16.8 & \cellcolor{cavg}44.7 \\
GT-Reward & 82.7 & 53.2 & 25.3 & 64.0 & 30.3 & 59.2 & 15.2 & \cellcolor{cavg}47.1 \\
\cmidrule(lr){1-9}
TTRL & 83.9 & 51.0 & \textbf{27.7} & 64.6 & 21.2 & 58.2 & 16.3 & \cellcolor{cavg}46.1 \\
RENT & 79.5 & 48.2 & 21.7 & \underline{67.7} & 19.7 & \underline{60.0} & 16.0 & \cellcolor{cavg}44.7 \\
Intuitor & 79.7 & 45.8 & 21.7 & 65.8 & \underline{26.8} & 58.0 & 16.1 & \cellcolor{cavg}44.8 \\
Co-rewarding-II & \underline{84.7} & 52.0 & 24.1 & 67.1 & 22.2 & 59.8 & 16.5 & \cellcolor{cavg}46.6 \\
\method (Same family) & \textbf{85.4} & 51.4 & 22.9 & \textbf{68.3} & 23.2 & \textbf{60.4} & 15.8 & \cellcolor{cavg}\underline{46.8} \\
\method (Different family) & 83.6 & \underline{54.8} & \textbf{27.7} & \underline{67.7} & 18.2 & 57.6 & \textbf{17.7} & \cellcolor{cavg}\underline{46.8} \\
\method (Different family+) & \textbf{85.4} & \textbf{55.6} & \underline{26.5} & 64.6 & \textbf{27.3} & 57.2 & \underline{17.1} & \cellcolor{cavg}\textbf{47.7} \\
\bottomrule
\end{tabular}}
\caption{Results at 7B and 8B on the seven-benchmark suite (\%). Base and GT-Reward serve as references and
are excluded from the ranking.}
\label{tab:llm78b}
\end{table*}

\begin{table*}[t]\centering
\setlength{\tabcolsep}{5pt}\renewcommand{\arraystretch}{1.12}\small
\resizebox{\textwidth}{!}{%
\begin{tabular}{ll ccccc}
\toprule
Backbone & Method & MathVision & MathVerse & MathVista & We-Math & \cellcolor{cavg}\textbf{Avg} \\
\midrule
\multirow{4}{*}{\textbf{Qwen2.5-VL-7B}}
 & GT-Reward & 26.74 & 41.07 & 71.90 & 67.01 & \cellcolor{cavg}51.68 \\
 & Base & 23.36 & 33.32 & 56.60 & 62.47 & \cellcolor{cavg}43.94 \\
 & TTRL & \underline{23.62} & \underline{37.26} & \underline{69.40} & \underline{65.23} & \cellcolor{cavg}\underline{48.88} \\
 & \method (Different family) & \textbf{26.87} & \textbf{38.43} & \textbf{71.00} & \textbf{68.22} & \cellcolor{cavg}\textbf{51.13} \\
\midrule
\multirow{4}{*}{\textbf{InternVL-3.5-8B}}
 & GT-Reward & 37.24 & 43.35 & 69.30 & 73.51 & \cellcolor{cavg}55.85 \\
 & Base & 29.21 & 36.65 & 65.70 & 60.69 & \cellcolor{cavg}48.06 \\
 & TTRL & \underline{35.07} & \textbf{41.24} & \underline{68.60} & \textbf{71.72} & \cellcolor{cavg}\underline{54.16} \\
 & \method (Different family) & \textbf{35.30} & \underline{40.74} & \textbf{70.60} & \underline{70.98} & \cellcolor{cavg}\textbf{54.40} \\
\midrule
\multirow{4}{*}{\textbf{Gemma-3-12B}}
 & GT-Reward & 30.89 & 33.63 & 56.90 & 59.25 & \cellcolor{cavg}45.17 \\
 & Base & 27.20 & 32.70 & 46.70 & 60.50 & \cellcolor{cavg}41.78 \\
 & TTRL & \underline{27.93} & \textbf{36.37} & \underline{54.70} & \underline{58.79} & \cellcolor{cavg}\underline{44.45} \\
 & \method (Different family) & \textbf{32.01} & \underline{35.91} & \textbf{55.60} & \textbf{66.72} & \cellcolor{cavg}\textbf{47.56} \\
\bottomrule
\end{tabular}}
\caption{Vision-language results at 7B to 12B on open-r1, with
InternVL3.5-8B as the shared partner.}
\label{tab:mllm_large}
\end{table*}

\subsection{Training Dynamics and Stability}
\label{ap:training_dynamic}

\paragraph{Text models.}
Figure~\ref{fig:dynamics} compares validation accuracy, reward standard deviation, and mean completion length throughout training. Across all four model families, \method maintains a non-degenerate reward standard deviation and relatively stable completion lengths while steadily improving validation accuracy. In contrast, several self-rewarding methods become unstable during training. RENT rapidly drives the reward standard deviation toward zero and often exhibits a sharp increase in completion length, accompanied by degraded accuracy or divergence. Intuitor similarly shows substantial reductions in reward variation and, for some models, degenerate completion lengths. TTRL is more stable, but its reward variation generally decreases and its validation performance remains below \method.

These observations align with the dynamics analyzed in Section~\ref{sec:theory}. Under self-rewarding, the supervision signal is determined by the optimized agent itself. Consequently, incorrect predictions can be reinforced rather than corrected. As the model increasingly agrees with its own pseudo-labels, rollout rewards can also become homogeneous, reducing the group-relative learning signal. In \method, supervision is instead provided by another agent: since the update direction of one agent is determined by its peer's supervision signal, errors that would be self-reinforced can be corrected when the agents have complementary strengths. The stable reward variation observed in Figure~\ref{fig:dynamics} is consistent with this cross-agent signal remaining informative throughout training.

\paragraph{Vision-language models.}
We observe the same qualitative behavior in the multimodal setting. As shown in Figure~\ref{fig:mllm_dyn}, \method continues to improve evaluation accuracy while maintaining stable completion lengths, whereas TTRL eventually degrades in both accuracy and response length. More importantly, the agreement between the two agents remains well below full agreement throughout training, while the accuracy of the exchanged pseudo-labels steadily increases. Thus, the agents do not simply converge to identical behaviors; instead, they preserve meaningful differences while providing increasingly reliable supervision to one another. This provides further empirical support for the mechanism predicted by our theory: cross-agent learning benefits from complementary supervision rather than requiring the agents to collapse to the same predictions.

\begin{figure}[t]\centering
  \includegraphics[width=\textwidth]{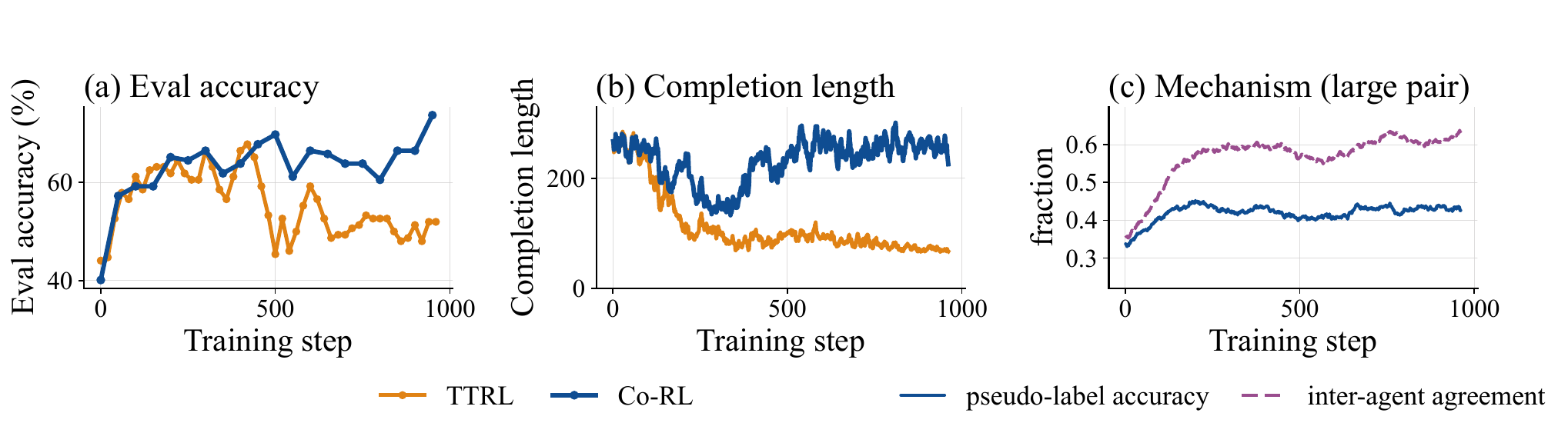}
  \caption{Training dynamics for Qwen2.5-VL-7B trained with InternVL3.5-8B on
open-r1. (a) Evaluation accuracy, (b) mean completion length, and (c) the
accuracy of the exchanged pseudo-labels together with the agreement between
the two agents. \method keeps improving and holds its completion length,
while TTRL peaks and then degrades in both.}
  \label{fig:mllm_dyn}
\end{figure}

\subsection{Controlling for the Two-Agent Training Budget}
\label{sec:ensemble_control}

Unlike single-agent self-rewarding baselines, \method jointly trains two agents. To control for this additional training budget, we construct a matched self-rewarding baseline that trains the \emph{same two base models} independently with TTRL on the same prompts. At inference time, we ensemble the two independently trained models: each model generates four responses, and the resulting eight responses are pooled for majority voting. We apply the same ensemble protocol to the two agents trained with \method, such that the compared methods use both the same training-model budget and the same test-time sampling budget. We additionally report each trained agent individually to separate the effect of training from that of ensembling. 

Tables~\ref{tab:ens_llm} and~\ref{tab:ens_mllm} report the results for text and multimodal reasoning, respectively. Simply training two self-rewarding agents and ensembling their predictions provides only limited gains over the stronger individual model. In contrast, ensembling the agents trained with \method consistently gives the best macro-average on the text benchmarks and on the two multimodal settings. These results indicate that the advantage of \method cannot be explained merely by training two models. Rather, cross-agent supervision produces agents whose predictions combine more effectively under the same training and inference budgets.

\begin{table}[t]\centering\small
\setlength{\tabcolsep}{7pt}\renewcommand{\arraystretch}{1.12}
\begin{tabular}{l ccc c}
\toprule
Setting & GSM8K & MATH-500 & AMC & \cellcolor{cavg}\textbf{Avg} \\
\midrule
TTRL (Qwen2.5-3B)      & \underline{88.2} & 68.8 & \textbf{39.8} & \cellcolor{cavg}65.6 \\
TTRL (Llama-3.2-3B)    & 65.7 & 56.0 & 27.7 & \cellcolor{cavg}49.8 \\
TTRL (ensemble)        & \underline{88.2} & 68.0 & \underline{38.6} & \cellcolor{cavg}64.9 \\
\midrule
\method (Qwen2.5-3B)   & 87.4 & \textbf{72.8} & 37.4 & \cellcolor{cavg}\underline{65.9} \\
\method (Llama-3.2-3B) & 87.3 & 58.8 & 33.7 & \cellcolor{cavg}59.9 \\
\method (ensemble)     & \textbf{90.1} & \underline{70.8} & \textbf{39.8} & \cellcolor{cavg}\textbf{66.9} \\
\bottomrule
\end{tabular}
\caption{Matched-budget comparison between TTRL and \method on text
reasoning benchmarks. Both settings train the same two base models,
Qwen2.5-3B and Llama-3.2-3B-Instruct. The ensemble rows pool four rollouts
from each of the two models for majority voting (maj@8, $T=0.6$).
\textbf{Avg} is the macro-average over the three benchmarks. For each
benchmark, the best result is in \textbf{bold} and the second best is
\underline{underlined}, with ties sharing the marking.}
\label{tab:ens_llm}
\end{table}

\begin{table}[t]\centering\footnotesize
\setlength{\tabcolsep}{5pt}\renewcommand{\arraystretch}{1.1}
\begin{tabular}{l cccc c}
\toprule
Setting & MathVision & MathVerse & MathVista & We-Math & \cellcolor{cavg}\textbf{Avg} \\
\midrule
\multicolumn{6}{l}{\textit{open-r1}}\\
\cmidrule(lr){1-6}
TTRL (Qwen2.5-VL)      & 22.96 & 31.45 & 61.10 & 63.39 & \cellcolor{cavg}44.73 \\
TTRL (InternVL3.5)     & \underline{29.67} & \textbf{38.91} & 62.30 & 67.24 & \cellcolor{cavg}49.53 \\
TTRL (ensemble)        & 27.24 & 35.13 & \underline{65.40} & 67.41 & \cellcolor{cavg}48.80 \\
\method (Qwen2.5-VL)   & 25.43 & 35.66 & 64.80 & 65.80 & \cellcolor{cavg}47.92 \\
\method (InternVL3.5)  & \textbf{30.46} & \underline{38.60} & 63.30 & \underline{67.87} & \cellcolor{cavg}\underline{50.06} \\
\method (ensemble)     & 28.95 & 38.48 & \textbf{67.00} & \textbf{69.08} & \cellcolor{cavg}\textbf{50.88} \\
\midrule
\multicolumn{6}{l}{\textit{MMR1}}\\
\cmidrule(lr){1-6}
TTRL (Qwen2.5-VL)      & 17.27 & 30.71 & 63.40 & 60.57 & \cellcolor{cavg}42.99 \\
TTRL (InternVL3.5)     & 28.78 & 39.47 & 63.70 & 66.90 & \cellcolor{cavg}49.71 \\
TTRL (ensemble)        & 25.53 & 37.77 & \underline{67.00} & 66.44 & \cellcolor{cavg}49.19 \\
\method (Qwen2.5-VL)   & 25.86 & 34.59 & 66.00 & 64.94 & \cellcolor{cavg}47.85 \\
\method (InternVL3.5)  & \textbf{30.79} & \textbf{40.94} & 65.30 & \underline{67.53} & \cellcolor{cavg}\underline{51.14} \\
\method (ensemble)     & \underline{30.49} & \underline{39.75} & \textbf{69.40} & \textbf{69.54} & \cellcolor{cavg}\textbf{52.30} \\
\bottomrule
\end{tabular}
\caption{Matched-budget comparison between TTRL and \method on multimodal
reasoning benchmarks. Both settings train the same two base models,
Qwen2.5-VL-3B and InternVL3.5-2B. The ensemble rows pool four rollouts from
each of the two models for majority voting (maj@8, $T=0.6$, top-$p$ 0.95).
Rows are grouped by training set. MMR1 runs use the corrected multiple-choice
grader and open-r1 runs the legacy grader, so the two blocks are not
compared against each other.}
\label{tab:ens_mllm}
\end{table}

\subsection{Evaluation Details}
\label{ap:setting}

\textbf{Language model evaluation.} All language benchmarks except
LiveCodeBench run through lm-evaluation-harness. Sampling uses a temperature
of 0.6 with top-$p$ 0.95 and a 3072-token generation budget. Trained
checkpoints are evaluated with their tokenizer's chat template, which
matches the prompt format used during training. Base models never saw a
template and are evaluated without one. For AMC we sample eight responses
per problem and report avg@8, averaged over three evaluation seeds. All
other benchmarks use a single sample. GPQA uses the Diamond subset with a
boxed-answer prompt. Answers are scored by rule-based graders with
math-aware normalization, and multiple-choice questions accept both the
option letter and the option value. LiveCodeBench (release v6) runs through
its official harness at its default temperature of 0.2 with one sample per
problem. The harness hard-codes a gated Llama-3 tokenizer for prompt
construction, and we substitute the openly available Llama-3.2-3B-Instruct
tokenizer, which carries the same chat template.

\textbf{CoMAS comparison.} The comparison with CoMAS uses its benchmark suite (GSM8K, MATH-500, HumanEval, MBPP, SciBench~\citep{wang2024scibench}, GPQA and MMLU~\citep{hendrycks2021mmlu}), its driver and its graders, with training prompts drawn from their blended 2,000 problems from MATH~\citep{hendrycks2021math}, KodCode~\citep{xu2025kodcode} and WebInstruct-verified~\citep{ma2025generalreasoner}. Their protocol draws
five samples per question at temperature 0.7 and then issues a sixth call
that reasons over the five drafts, and only the sixth response is graded.
On coding benchmarks this aggregation admits a loophole. A response that
quotes several candidate solutions has every code block executed, and
grading stops at the first block that passes, so a model that quotes many
candidates is effectively scored at pass@5 while a decisive one is scored
at pass@1. In our measurement this is worth 7.3\% to the untrained baseline
and 2.4\% to our model. On coding benchmarks we therefore keep the
five-sample budget but replace the aggregation with majority voting over
candidates clustered by their execution behavior on the public example
inputs. 

\textbf{Three-agent runs.} The three-agent runs use the same training
configuration as the two-agent language runs and are evaluated with the
protocol above. 

\textbf{Vision-language evaluation.} We follow the benchmark splits of
MM-UPT, the MathVision test set (3{,}040 problems), the MathVerse testmini
split (3{,}940, all five versions), the MathVista testmini split (1{,}000)
and the We-Math testmini split (1{,}740). Decoding is greedy with a
16384-token generation budget. Images are resized so that the long side
does not exceed 1024 pixels, matching the training-time preprocessing.
Trained checkpoints are prompted in the format they were trained on, with
reasoning in think tags and the final answer in answer tags. Base models
receive a standard boxed-answer prompt. Scoring is two-stage. A rule-based
pass extracts the final answer and grades it with math-aware matching, and
a response that never commits to an answer in a recognized format counts as
incorrect. Responses that follow the format but fail the rule match are
passed to an LLM judge, Qwen2.5-32B-Instruct at temperature 0, which
accepts only semantically equivalent answers and rejects responses that are
cut off. The judge can only recover rule-grading false negatives and never
overturns a rule-credited answer. MMR1 runs use the corrected
multiple-choice grader and open-r1 runs the legacy grader, so results
across the two training sets are not compared.

\textbf{Engineering notes.} All fixes below ship with the released code.
None of them changes the training or evaluation semantics. They repair
crashes or a wrong backend choice in the underlying libraries.

\emph{Gemma-3, embedding initialization under ZeRO-3.} At startup, the
weight initializer zeroes the embedding row at \texttt{padding\_idx}. Under
DeepSpeed ZeRO-3, most ranks hold empty parameter shards, so this write
fails before training begins. The branch is reached only when a model sets
\texttt{padding\_idx}, which Gemma-3 does and Qwen2.5-VL does not. We guard
the initializer to skip embeddings whose local shard is empty.

\emph{Gemma-3, batched prompt tokenization.} The Gemma-3 processor builds
\texttt{token\_type\_ids} by stacking the unpadded prompts of a batch into
one array. Prompts of unequal length make this stacking fail at the first
training step. We wrap the processor to tokenize with padding and strip the
padding through the attention mask immediately after. The wrapper is a
no-op for every other processor.

\emph{Gemma-3, log-probability drift.} Between the vLLM rollout engine and
the training forward pass, Gemma-3 shows a systematic per-token
log-probability drift of about 0.13. This is an architectural discrepancy
rather than a removable bug. Following the reference recipes for this
model, Gemma-3 runs, and only Gemma-3 runs, train with token-level
truncation of the importance-sampling ratio.

\emph{Qwen2.5-VL, vision-tower attention backend.} In vLLM 0.11.2, the
helper that selects the vision tower's attention backend silently promotes
xFormers to the bundled FlashAttention build. That build supports head
dimensions that are multiples of 32 only, and the Qwen2.5-VL vision tower
has head dimension 80, so the model crashes at load. We patch the helper to
keep the original xFormers choice, which has no head-dimension restriction.
Gemma-3 and InternVL vision towers have head dimensions that are multiples
of 32 and are unaffected, and later vLLM releases fix the bug.

\emph{InternVL3.5, processor and tiling.} We use the transformers-native HF
variants, whose checkpoints load through \texttt{AutoProcessor} without the
legacy remote-code path. Dynamic patch tiling is disabled at both training
and evaluation, so the image-token count per sample is identical in the two
settings. Model families are detected from each checkpoint's configuration
file rather than from directory names, since \method run directories
contain both partners' names.

\end{document}